\documentclass[11pt]{article}

\usepackage{arxiv}
\usepackage{pifont}

\usepackage[utf8]{inputenc} 
\usepackage[T1]{fontenc}    
\usepackage{hyperref}       
\usepackage{url}            
\usepackage{booktabs}       
\usepackage{amsfonts}       
\usepackage{nicefrac}       
\usepackage{microtype}      
\usepackage{lipsum}
\usepackage{graphicx}
\usepackage{amsmath}
\usepackage{natbib}
\usepackage{xcolor}
\usepackage{amsmath}
\usepackage{amssymb}
\usepackage{dsfont}
\usepackage{amsthm}
\usepackage{mathtools}
\usepackage{booktabs}

\graphicspath{ {./images/} }

\title{Hyperparameter Transfer in Graph Neural Networks}

\author{
Gage DeZoort, Boris Hanin\\
Department of Operations Research and Financial Engineering\\
School of Engineering and Applied Science\\
Princeton University\\
Princeton, NJ 08544\\
\texttt{\{jdezoort,bhanin\}@princeton.edu}
}
\DeclareMathAlphabet\mathbfcal{OMS}{cmsy}{b}{n}

\begin{document}

\newcommand{\cmark}{\ding{51}}%
\newcommand{\xmark}{\ding{55}}%
\newcommand{\Ls}[0]{\mathcal{L}}
\newcommand{\Xs}[0]{\mathcal{X}}
\newcommand{\R}[0]{\mathbb{R}}
\newcommand{\C}[0]{\mathbb{C}}
\newcommand{\Hs}[0]{\mathcal{H}}
\newcommand{\Yl}[0]{Y_{\ell}}
\newcommand{\Yla}[1]{Y_{\ell}({#1})}
\newcommand{\Ylm}[0]{Y^{(\ell)}_m}
\newcommand{\Ylma}[1]{Y^{(\ell_{#1})}_{m_{#1}}}
\newcommand{\Vl}[0]{\mathcal{V}_\ell}
\newcommand{\V}[1]{\mathcal{V}_{#1}}
\newcommand{\Lt}[1]{\mathbb{L}_2({#1})}
\newcommand{\li}[0]{\ell_i}
\newcommand{\lf}[0]{\ell_f}
\newcommand{\lo}[0]{\ell_o}
\newcommand{\ruv}[0]{r_{uv}}
\newcommand{\rhuv}[0]{\hat{r}_{uv}}
\newcommand{\Vin}[0]{V_\mathrm{in}}
\newcommand{\Vout}[0]{V_\mathrm{out}}
\newcommand{\cin}[1]{c^\mathrm{in}_{#1}}
\newcommand{\cout}[1]{c^\mathrm{out}_{#1}}
\newcommand{\kth}[0]{k^\mathrm{th}}
\newcommand{\pb}[0]{\mathbf{p}}
\newcommand{\xb}[0]{\mathbf{x}}
\newcommand{\vb}[0]{\mathbf{v}}
\newcommand{\hb}[0]{\mathbf{h}}
\newcommand{\zb}[0]{\mathbf{h}}
\newcommand{\lr}[1]{\left(#1\right)}
\newcommand{\set}[1]{\left\{#1\right\}}
\newcommand{\twovec}[2]{\left[\begin{array}{c}#1\\ #2\end{array}\right]}
\newcommand{\tworowvec}[2]{\left[\begin{array}{cc}#1 & #2\end{array}\right]}
\newcommand{\twomat}[4]{\left(\begin{array}{cc}#1 & #2\\ #3 &#4\end{array}\right)}
\newcommand{\threevec}[3]{\left[\begin{array}{c}#1\\ #2\\ #3\end{array}\right]}
\newcommand{\threerowvec}[3]{\left[\begin{array}{ccc}#1 & #2 & #3\end{array}\right]}
\newcommand{\threemat}[9]{\left(\begin{array}{ccc}#1 & #2& #3\\ #4 & #5 & #6\\ #7 & #8 & #9\end{array}\right)}
\newcommand{\Exp}[1]{\mathbb E\left[#1\right]}
\newcommand{\norm}[1]{\left|\left|#1\right|\right|}
\newcommand{\tr}{\mathrm{Tr}}
\newcommand{\trarg}[1]{\mathrm{Tr}\big(#1\big)}
\newcommand{\Ch}[0]{\widehat{\mathcal C}}
\newcommand{\PiTwo}[0]{\Pi_{\geq 2}}
\newcommand{\vset}{\mathcal V}
\newcommand{\eset}{\mathcal E}
\newcommand{\mpop}{\mathbf P}
\newcommand{\rb}[2]{\mathcal{B}^{(#1)}\big(#2\big)}
\newcommand{\pt}[1]{\mathcal{P}_t^{(#1)}}
\newcommand{\relu}{\texttt{ReLU}}
\renewcommand{\tanh}{\texttt{tanh}}
\newcommand{\identity}{\texttt{identity}}
\newcommand{\elu}{\texttt{ELU}}
\newcommand{\leakyrelu}{\texttt{LeakyReLU}}
\newcommand{\softmax}{\textsc{softmax}}
\renewcommand{\sb}{s_\beta}
\newcommand{\st}{s_t}
\newcommand{\trainset}{\mathcal T}
\newcommand{\attn}{\mathbfcal{A}}
\newcommand{\gat}{\tilde A_\mathrm{GAT}}
\newcommand{\graphtrans}{\tilde A_\mathrm{GT}}
\newcommand{\dadam}{\delta_\mathrm{adam}}
\newcommand{\mlp}[1]{\mathrm{MLP}\big(#1\big)}
\newcommand{\mhsa}[1]{\mathrm{MHSA}\big(#1\big)}
\newcommand{\gnn}[1]{\mathrm{GNN}\big(#1\big)}
\newcommand{\mpnn}[1]{\mathrm{MPNN}\big(#1\big)}
\newcommand{\LN}[1]{\text{LN}\big(#1\big)}
\newcommand{\venc}[1]{f^{(0)}_x(#1)}
\newcommand{\eenc}[1]{f^{(0)}_e(#1)}
\newcommand{\vdec}[1]{f^{(L+1)}_x(#1)}
\newcommand{\gdec}[1]{f^{(L+1)}_g\big(#1\big)}
\newcommand{\dec}[1]{f^{(L+1)}\big(#1\big)}
\newcommand{\betamsg}[0]{\beta_\mathrm{msg}}
\newcommand{\betaatt}[0]{\beta_\mathrm{att}}
\newcommand{\betamlp}[0]{\beta_\mathrm{mlp}}
\newcommand{\tepoch}[0]{\tau_\mathrm{epoch}}
\newcommand{\titer}[0]{\tau_\mathrm{iter}}
\newcommand{\one}[0]{\mathbf{1}}
\newcommand{\x}[2]{\mathbf{x}^{(#1)}_{#2}}
\newcommand{\X}[1]{\mathbf{X}^{(#1)}}
\newcommand{\vecX}[1]{\mathrm{vec}\big(\X{#1}\big)}
\newcommand{\E}[1]{\mathbf{E}^{(#1)}}
\newcommand{\e}[1]{\mathbf{e}^{(#1)}}
\newcommand{\A}[0]{\mathbf{A}}
\newcommand{\ntrain}[0]{N_\mathrm{train}}
\newcommand{\W}[2]{\mathbf W^{(#1)}_{#2}}
\newcommand{\btheta}[1]{\boldsymbol{\theta}^{#1}}
\newcommand{\J}[4]{\boldsymbol{J}^{(#1,#3)}_{#2\leftarrow #4}}
\newcommand{\Jt}[2]{\boldsymbol{J}^{(#1, #2)}}
\newcommand{\ID}[0]{\mathbf{I}}
\newcommand{\grad}[1]{\boldsymbol{g}^{(#1)}}
\newcommand{\Xt}[1]{\tilde{\mathbf{X}}^{(#1)}}
\newcommand{\Wt}[1]{\tilde{\mathbf{W}}^{(#1)}}
\newcommand{\etasgd}[0]{\eta_{{}_{GD}}}
\newcommand{\etaadam}[0]{\eta_{{}_{Adam}}}
\newcommand{\Mab}[0]{M_{\alpha\beta}}
\newcommand{\Nba}[0]{N_{\mathcal{B}_\alpha}}
\newcommand{\Nbb}[0]{N_{\mathcal{B}_\beta}}
\newcommand{\Cab}[0]{C_{\alpha\beta}}

\newtheorem{theorem}{Theorem}
\newtheorem{lemma}[theorem]{Lemma}
\newtheorem{corollary}[theorem]{Corollary}
\newtheorem{proposition}[theorem]{Proposition}
\newtheorem{question}[theorem]{Question}
\newtheorem{conjecture}[theorem]{Conjecture}
\newtheorem{prob}{Problem}
\newtheorem{definition}[theorem]{Definition}
\newtheorem{remark}[theorem]{Remark}
\newtheorem{assumption}[theorem]{Assumption}

\maketitle
\begin{abstract}
The performance of deep learning models crucially depends on the settings of hyperparameters like learning rate, initialization scale, and weight decay. 
Hyperparameter transfer aims to make near-optimal hyperparameter settings consistent across model scale, so that large models can be optimized by proxy tuning their smaller, cheaper-to-optimize counterparts. 
While transfer principles are well-studied in the context of dense neural networks in language and vision tasks, they remain comparatively underexplored for graph neural networks (GNNs). 
We develop and validate a transfer parameterization for GNNs trained with SGD, Adam, and AdamW.
Through theoretical scaling analyses and controlled experiments, we show that the proposed parameterization yields stable feature updates, learning rate transfer, and improved performance as width and depth increase. 
For SGD, we identify graph-dependent first-layer correction factors and show that their use can accelerate early training in graphs with sparse bag-of-words inputs. 
For Adam, we explore how different message passing normalizations affect early- and late-training transfer behavior, illustrating the importance of message passing normalization and advocating for an associated hyperparameter. 
For AdamW, we adapt a parameterization that allows for the joint transfer of weight decay and learning rate. 
Together, these results provide a practical recipe for scaling GNNs across a variety of learning tasks and training scenarios.

\end{abstract}


\section{Introduction}
\label{sec:introduction}

The performance of trained deep learning models is sensitive to the specification of various hyperparameters, most importantly initialization scale and learning rate.  Tuning these hyperparameters is often prohibitively expensive at large scale, i.e. in models with large width ($D$) and depth ($L$).  This has motivated the modern paradigm of hyperparameter transfer, a prescription for how to infer near-optimal hyperparameters in large models from near-optimal hyperparameters in much smaller models, where they are relatively cheap to find empirically. Hyperparameter transfer has been established for feedforward neural networks (NNs) \cite{yang2021zero}, residual networks \cite{bordelon2024depthwise}, dense transformers \cite{dey2025dontlazycompletepenables}, sparse MoE transformers \cite{jiang2026hyperparameter} and other models. Existing approaches to hyperparameter transfer seek a \textit{parameterization}, a prescription for how to modify hyperparameters and initialization schemes across model scale. Successful parameterizations typically satisfy two heuristics: 
\begin{enumerate}
    \item \textit{Maximal Updates}: The optimal setting of the model's hyperparameters leads to order-one updates of each neuron pre-activation from each step of optimization, independent of model size. 
    \item \textit{Limiting Behavior}: As model size increases, performance increases and training dynamics converge to a well-defined and fully non-linear limit.
\end{enumerate}

Though hyperparameter transfer has been extensively studied in neural networks (NNs) used in language and vision tasks, it remains relatively poorly understood in graph neural networks (GNNs). This is in part because many common GNNs suffer from GNN-specific numerical pathologies, including poor initialization, oversmoothing, and oversquashing~\cite{li2018deeper, oono2020graph, alon2021on}. These difficulties have motivated the development of novel GNN architectures including graph transformers (GTs)~\cite{shirzad2023exphormer,ladislav2022graphgps}, a broad class of models that apply restricted attention and static message passing to graph data.  

Our objective in this article is to extend and validate hyperparameter transfer principles for GNNs and GTs. Our key contributions are the following: 
\begin{itemize}
    \item We derive and empirically validate learning rate transfer scalings for GNNs trained with SGD (Props.~\ref{th:feature_stability} and~\ref{th:update-stability-gd-lr-graph-regression}, Fig.~\ref{fig:sweep-lr_sgd})  and Adam (Prop.~\ref{th:adam_lrs}, Fig.~\ref{fig:sweep-lr_adam}), summarized in Eqn.~\ref{eq:global_lrs}.
    \item We show that for GD on graph regression, correlations between node features belonging to different graphs change the scale of encoded representations (Prop.~\ref{th:update-stability-gd-lr-graph-regression}), necessitating the use of a first-layer learning rate correction factors for some datasets. We show that the form of the correction factor is slightly altered when batches of graphs are used (Remark~\ref{th:batching-effects_sgd-graph-regression}) or when the task is node classification (Remark~\ref{th:batching-effects_gd-vertex-classification}). 
    Empirically, we show that the first-layer correction factor is important for speeding up early training in citation networks with sparse bag-of-words features (Fig.~\ref{fig:first-layer-corr}). 
    \item We adapt the AdamW weight decay parameterization recipe from Ref.~\cite{loshchilov2017adamw} and demonstrate learning rate transfer for GNNs trained with AdamW (Fig.~\ref{fig:sweep-lr_adamw}) with the correct weight decay parameterization (Eqns.~\ref{eq:weight_decay},~\ref{eq:t_epoch}).  
    \item We explore the effect of using message passing operators with different normalizations in Figs.~\ref{fig:normalization-sweep_mnist} and~\ref{fig:normalization-sweep_pascal}, finding that normalized operators are important for robust transfer in early and late training, and advocating for a global normalization strategy controlled by a hyperparameter $\gamma$ (Eqn.~\ref{eq:normalized-gcn-update}). 
\end{itemize}

\subsection{Related Works}
\label{sec:review-of-literature}

Hyperparameter transfer was first developed in the Tensor Programs series \cite{yang2021zero,yang2022infwidth}. Specifically, Tensor Programs IV presents a ``maximal update'' NN parameterization ($\mu$P) that avoids lazy training (the neural tangent kernel (NTK) limit) in the infinite-width limit.  The key idea is to select width-dependent learning rates in each layer so that the updates $\Delta\mathbf{x}^{(\ell)}\sim O(1)$ as width grows. In Tensor Programs V, the authors present how $\mu$P allows for ``zero-shot hyperparameter transfer,'' whereby hyperparameter sweeps on small models produce scale-independent optima, across width~\cite{yang2021zero}. The width-wise $\mu$P scalings do not guarantee depth-wise transfer, which is generally achieved through the use of residual connections of strength $1/L^{\alpha_L}$. The literature has shown that while $\alpha_L\in [\tfrac{1}{2},1]$ leads to stable signal propagation, only $\alpha_L=1$ leads to full feature learning (see below).  In Tensor Programs VI~\cite{yang2024infdepth}, the authors use residual connections with $\alpha_L=\frac{1}{2}$ to establish joint width-depth hyperparameter transfer. 

A parallel body of work uses dynamical mean field theory (DMFT) to study hyperparameter transfer by taking infinite-width and/or infinite-depth limits of the network dynamics. These papers have broadly shown that $\mu$P scalings arise naturally from seeking non-degenerate mean-field training dynamics in the limit of infinite model scale. In Ref.~\cite{bordelon2023dmft}, the authors apply DMFT to feedforward NNs, ultimately recovering the $\mu$P width scalings. The joint infinite-width and infinite-depth scaling is explored in Ref.~\cite{bordelon2024depthwise}, which uses residual branches with $\alpha_L=\frac{1}{2}$, similarly recovering the $\mu$P prescription. Ref.~\cite{bordelon2024transformer} applies DMFT to transformers for general $\alpha_L$ and, additionally, $\alpha_A$, which controls the strength of the pre-attention scaling $\mathbf{k}\cdot\mathbf{q}/N^{\alpha_A}$.

The CompleteP prescription in Ref.~\cite{dey2025dontlazycompletepenables} builds on this prior DMFT work and presents a refined set of desiderata for hyperparameter transfer: 
\begin{enumerate}
    \item \textbf{Feature Stability}: Features $\mathbf{x}^{(\ell)}$ should remain stable and order-one with respect to model size.  
    \item \textbf{Update Stability}: Feature updates $\Delta\mathbf{x}^{(\ell)}$ should remain stable and nontrivial with respect to model size. 
    \item \textbf{Complete Learning}: Updates to features should not converge to their linearizations around any given parameter. In practice, this is satisfied by using $\alpha_L=1$ in each residual branch. 
\end{enumerate}
The first two desiderata follow from the previous works, whereas the third is a new prescription that prevents \textit{lazy} learning, in which some layers only learn features close to their linearization. Ref.~\cite{mlodozeniec2025completedhyperparametertransfermodules} presents a follow-up, dubbed Complete$^{(d)}$, which explores hyperparameter transfer across non-global hyperparameters belonging to the modules in each residual branch, as well as batch size and training duration. 

Graph neural networks are most often based on message passing. Simple architectures like the GCN~\cite{kipf2017semisupervised} are first-order spectral filters operating on graphs without edge features. Other variants like the GAT~\cite{veličković2018graph} compute attention scores in the neighborhood of each node. 
Transformers~\cite{vaswani2017attention} can be viewed as GNNs on fully-connected graphs, wherein messages computed from key-query attention-weighted value scores are propagated to each input token. Early variants of graph transformers embedded the key concept behind GAT, namely computing attention over the edges of an input graph, into this transformer architecture, optionally including edge features~\cite{dwivedi2021generalization}. GT architectures like Graphormer~\cite{ying2021transformersperformbad}, GraphGPS~\cite{ladislav2022graphgps}, and Exphormer~\cite{shirzad2023exphormer} expand on these concepts by including simple message passing mechanisms and positional encodings specific to the graph domain. Sparsity is a key motivation for graph transformers; while Exphormer attempts to expand input graphs, attempts have also been made to sparsify them~\cite{shirzad2024even}. 

Training deep GNNs is difficult because of numerical issues like oversmoothing~\cite{li2018deeper, oono2020graph} and oversquashing~\cite{alon2021on}. Several attempts have been made to produce GNNs trainable to large depth, including GCNII~\cite{chen2020simple}, DAGNN~\cite{meng2020towardsdeepergnns}, and Ref.~\cite{dezoort2023principles}. As a result, neural scaling laws on GNNs are underexplored. Early attempts as in Ref.~\cite{liu2024towards} report key GNN-specific observations like scaling law collapse (due to overfitting) and data volume corresponding to number of nodes or edges, not number graphs. 

Hyperparameter transfer has not been as broadly applied to GNNs. 
Ref.~\cite{sypetkowski2024scalbilitygnns} presents a scaling analysis of GNNs on molecular data, using $\mu$P scalings for zero-shot scaling with respect to width, while facing numerical issues with their depth-$\mu$P scalings. 
It remains an open question how to apply hyperparameter transfer principles to graph transformers, whose architectures vary significantly. 

\begin{figure}[h!]
\begin{centering}
    \includegraphics[width=0.8\textwidth]{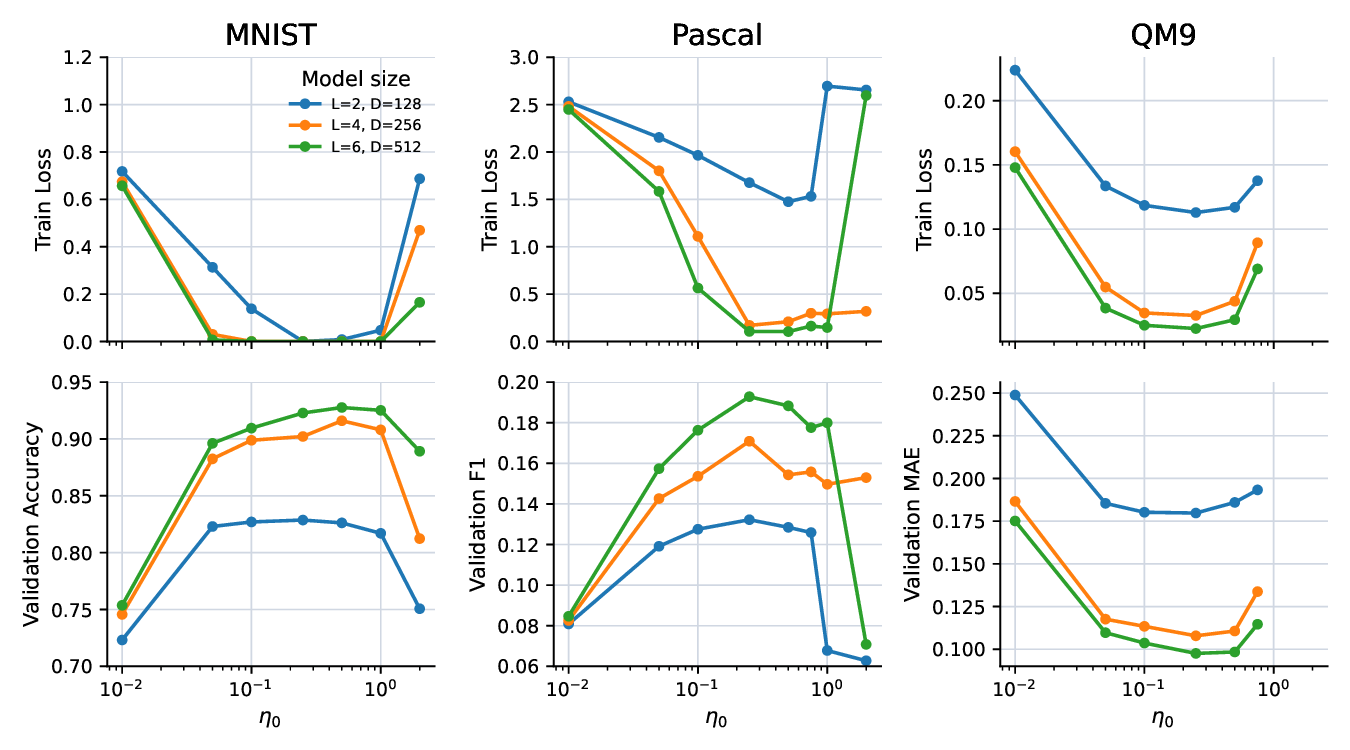}
    \caption{Transfer GNNs with symmetrically degree-normalized message passing operators ($\mpop=\tilde{\A}$) are trained via Adam ($\epsilon=10^{-14}$) on MNIST graph classification (left), Pascal node classification (center), and QM9 dipole moment regression (right). A batch size of 256 is used for all points; MNIST is trained for 400 epochs, Pascal is trained for 800 epochs, and QM9 is trained for 500 epochs. The best train loss (validation metric) attained during training is reported in the top (bottom) row.}
    \label{fig:sweep-lr_adam}
\end{centering}
\end{figure}

\begin{figure}[h!]
\begin{centering}
    \includegraphics[width=0.8\textwidth]{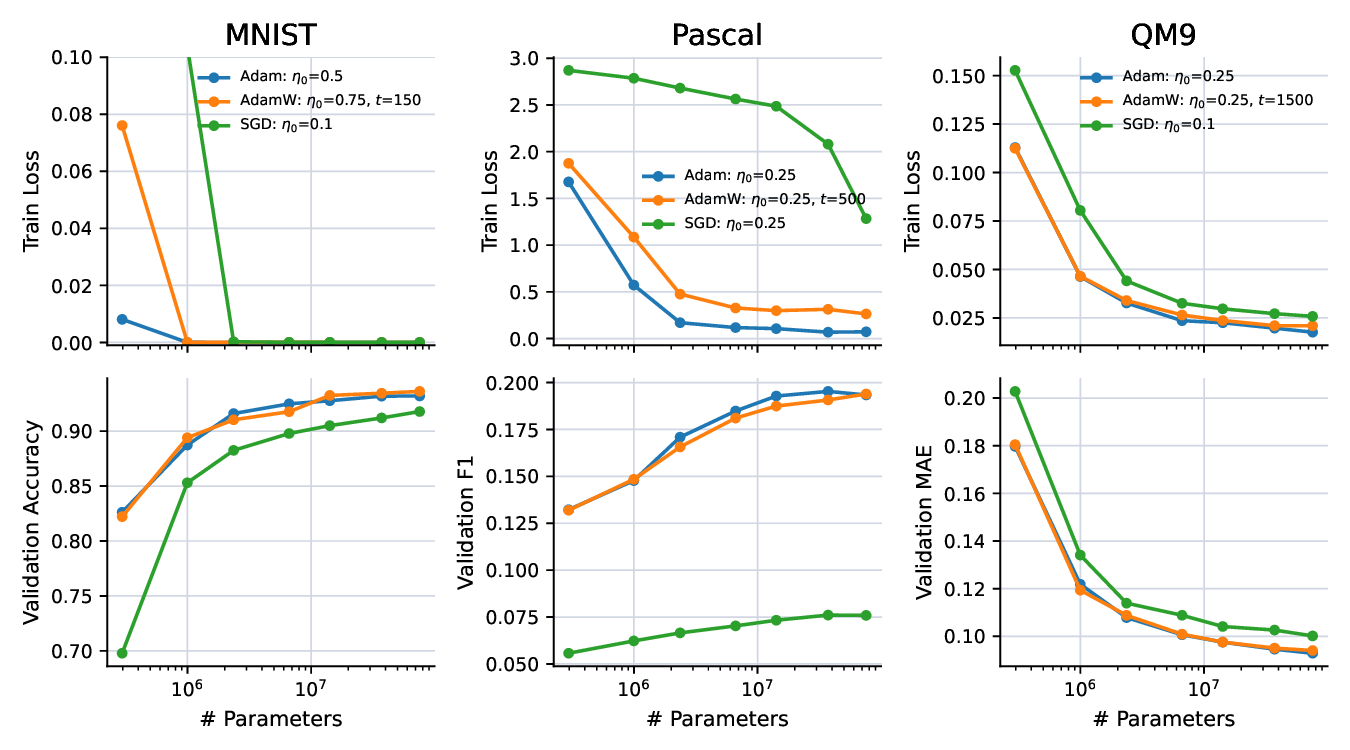}
    \caption{Transfer GNNs with symmetrically degree-normalized message passing operators ($\mpop=\tilde{\A}$) are trained via SGD (green), Adam (blue), and AdamW (orange) at each optimizer's respective optimal learning rate fixed from an $\eta_0$ transfer scan. 
    MNIST (left) is trained for 400 epochs, Pascal (center) is trained for 800 epochs, and QM9 is trained for 500 epochs. A batch size of 256 is used at all points. The best train loss (validation metric) attained during training is reported in the top (bottom) row.}
    \label{fig:sweep-size}
\end{centering}
\end{figure}

\newpage

\section{Summary of Results}
\label{sec:summary-of-findings}

In the following sections, we summarize our empirical results and connect them to the scaling analysis presented in Sec.~\ref{sec:scaling_analysis}. 
In Sec.~\ref{sec:graph-notation} we establish our notation for graphs and GNNs, and in Sec.~\ref{sec:base-transfer-gnn} we apply it to specify a base transfer GNN and its optimizer-specific initialization scales and learning rate parameterizations. 
Sec.~\ref{sec:learning-rate-transfer-SGD} demonstrates learning rate transfer with SGD and additionally explores the use of the data-dependent first-layer correction factors derived in Prop.~\ref{th:update-stability-gd-lr-graph-regression}. 
Sec.~\ref{sec:learning-rate-transfer-adam} demonstrates learning rate transfer with Adam and AdamW, the latter of which requires a width-dependent parameterization for weight decay. 
Finally, in Sec.~\ref{sec:adjacency-normalization} we explore the normalization of the message passing operator, recovering a trade-off between performance and strength of learning rate transfer when using unnormalized message passing operators.

\subsection{Graph Notation and Datasets}
\label{sec:graph-notation}

Consider a graph $\mathcal G = (\vset, \eset)$ with $N:=|\vset|$ nodes and $M:=|\eset|$ edges.
In most graph datasets, nodes have features $\mathbf{X}\in\R^{N\times n_0}$ and, in some, edge features $\mathbf{E}\in\R^{M\times m_0}$.
Occasionally, we will refer to the features belonging to vertex $u$ as $\mathbf{x}_u\in\R^{n_0}$.
Graphs are equipped with adjacency matrices summarizing their local connectivity,
\[
    \A\in\{0,1\}^{N\times N},
    \qquad
    \A_{uv} = 
    \begin{cases}
        1 & (u,v)\in\eset,
        \\
        0 & \text{otherwise},
    \end{cases}
\]
which we take to be symmetric in our experiments. 
It is common to use normalized adjacency matrices $\tilde{\A}$ in GNNs, for example the symmetrically degree-normalized adjacency matrix
\begin{equation}
    \tilde{\A}
    := 
    \mathbf D^{-\frac{1}{2}}\mathbf A\mathbf D^{-\frac{1}{2}},
    \qquad
    \mathbf D_{uv}=
    \begin{cases}
        d_u & u=v,
        \\
        0 & u\neq v,
    \end{cases}
    \qquad
    d_{u}:=\sum_{v\in\vset} \mathbf A_{uv}.
    \label{eq:symm-norm-adj}
\end{equation} 
The truth labels attached to each graph depend on the nature of the learning task, which may be graph-, node-, or edge-level classification or regression. 
GNNs are based on message passing~\cite{gilmer2017neural}, in which activations computed at each node are aggregated across each local neighborhood of the graph.
One simple GNN variant is the graph convolutional network (GCN, see Ref.~\cite{kipf2017semisupervised}), which passes messages weighted by the entries of $\tilde{\A}$, which is canonically taken to include self-loops, 
\begin{equation}
    \x{\ell+1}{u} 
    = 
    \frac{1}{\sqrt{D}}
    \sum_{v\in\mathcal{N}(u)}
    \frac{1}{\sqrt{d_ud_v}}
    \W{\ell+1}{}
    \x{\ell}{v}
    \qquad
    \leftrightarrow
    \qquad
    \X{\ell+1} 
    = 
    \frac{1}{\sqrt{D}} 
    \tilde{\A}
    \X{\ell}
    \W{\ell+1}{}
\label{eq:gcn-forward-pass}
\end{equation}
where here $\ell$ indexes the message passing depth ($1\leq\ell\leq L$), $D$ is the width, and $\W{\ell+1}{}\in\R^{D\times D}$ is a matrix of learnable weights initialized iid from $\mathcal{N}(0,1)$. 
Throughout, we will refer to generic message passing operators $\mathbf{P}\in\R^{N\times N}$ that may be normalized (like $\tilde{\A}$) or un-normalized (like $\A$). 
In this case, our generalized GNN forward pass is
\begin{equation}
    \mathbf{X}^{(\ell+1)} = \frac{1}{\sqrt{D}}\mathbf{P}\X{\ell}\W{\ell+1}{}.
    \label{eq:generic-gnn-forward-pass}
\end{equation}
Throughout, our goal will to scale GNNs with respect to both depth and width; in our experiments, we scale these parameters jointly.  

Throughout we experiment on a collection of graph benchmarks spanning graph-level classification (MNIST Superpixels classification~\cite{lecun-mnisthandwrittendigit-2010}) and regression (QM9 dipole moment prediction~\cite{ramakrishnan2014qm9}), and node-classification (PascalVOC-SP image segmentation~\cite{dwivedi2022lrgb}). 
We do not use edge features, though they are present in some datasets. 
In Sec.~\ref{sec:learning-rate-transfer-SGD} we provide an additional experiment with the Cora, CiteSeer, and PubMed semi-supervised citation classification datasets~\cite{yang2016planetoid}. 
Full benchmark dataset details are provided in Sec.~\ref{sec:benchmark-datasets}.

\subsection{Base Transfer GNN}
\label{sec:base-transfer-gnn}

GNNs are closely related to transformers, which compute attention-weighted messages across a fully-connected ``graph'' on the inputs. 
The key difference between GNNs and transformers is that the edge connectivity, a relational inductive bias encoded in $\A$, appears directly in the forward pass. 
Nonetheless, we work in analogy with hyperparameter transfer principles developed for transformers, and find that they translate well to the graph domain. 
Modern transformer architectures are based on the successive application of an attention-weighted, all-to-all message passing layer and a dense feedforward layer.
In Refs.~\cite{bordelon2024transformer,dey2025dontlazycompletepenables}, it was shown that CompleteP hyperparameter transfer crucially hinges on the use of residual connections scaled by $1/L$, where $L$ is the depth of message passing, in each layer. 
In analogy with these results, we propose the following \textit{base transfer GNN}; 
\begin{align}
    &\X{0} = 
    \frac{1}{\sigma_0}
    \venc{\mathbf X, \A}
    \label{eq:base-model-encoder}
    \\
    &\Xt{\ell+1}
    = 
    \X{\ell} 
    +
    \frac{1}{L}\ 
    \text{MPNN}^{(\ell+1)}\big(
        \X{\ell}, \A
    \big)
    \label{eq:base-model-mpnn}
    \\
    &\X{\ell+1} = 
    \Xt{\ell+1}
    + 
    \frac{1}{L}\ 
    \text{MLP}^{(\ell+1)}\big(
        \Xt{\ell+1}
    \big)
    \label{eq:base-model-mlp}
    \\
    &\mathbf{Z} = \frac{1}{\sigma_{L+1}\sqrt{D}} \vdec{\X{L}}
    \label{eq:base-model-decoder}
\end{align}
where here the layer-wise index $\ell$ runs from $0$ to $L-1$ in the residual branches in Eqns.~\ref{eq:base-model-mpnn}-~\ref{eq:base-model-mlp}. 

\setlength{\tabcolsep}{8pt} 
\renewcommand{\arraystretch}{1.5} 
\begin{table}[htb!]
  \centering
  \begin{tabular}{|ll|ll|ll|}
    \hline
     & & \textbf{SGD} &  & \textbf{Adam} &
     \\
     & $\ell$ & $\eta_\ell$ & $\sigma_\ell$ & $\eta_\ell$ & $\sigma_\ell$
    \\
    \hline
    \textbf{Encoder} & $0$ & $\eta_0 D\sigma_0^2 $ & $\sqrt{L}$ & $\eta_0\sigma_0$ & $1/\sqrt{D}$
    \\
    \textbf{Residual} & $[1:L]$ & $\eta_0 DL$ & 1 & $\eta_0/\sqrt{D}$ & 1
    \\
    \textbf{Decoder} & $L+1$ & $\eta_0 D\sigma_{L+1}^2 $ & $\sqrt{L}$ & $\eta_0\sigma_{L+1}$ & $1/\sqrt{D}$
    \\
    \hline
  \end{tabular}
  \vspace{2mm}
  \caption{Effective learning rates $\eta_\ell$ required in each layer, with their corresponding correction factors $\sigma_\ell$, to be applied when global learning rates are used. }
  \label{tab:learning_rate_settings}
\end{table}

The encoder module $\venc{\cdot}$ in Eqn.~\ref{eq:base-model-encoder} may be a simple linear layer or a more complicated structure leveraging positional encodings, hence the dependence on $\A$.
The factor of $\sigma_0^{-1}$ is introduced to control the effective learning rate in the encoder layer, whose weights should be drawn iid from $\mathcal{N}(0,\sigma_0^2)$. 
In the forward pass, $\sigma_0^{-1}$ simply cancels the upscaled initialization variance, but in the backward pass it survives to rescale the gradient. 
As we describe below, this allows us to fix an \textit{effective learning rate} in the encoder layers while using a \textit{global learning rate} shared across all layers. 

The residual stream in Eqns.~\ref{eq:base-model-mpnn}-~\ref{eq:base-model-mlp}  is computed in two successive residual steps; the first uses a message passing NN (MPNN) and the second uses a feedforward NN (MLP). 
The successive application of local neighborhood updates and dense NN updates are inspired by the transformer architecture. 
The factors of $1/L$ in ensure the gradient in the residual layers scales like $L^{-1}$, so that the total contribution of the residual stream to the network remains order-one. 

The decoder module $\vdec{\cdot}$ in Eqn.~\ref{eq:base-model-decoder} may be a simple linear layer, though deeper decoder layers are possible. 
As in the encoder layer, a factor of $\sigma_{L+1}$ is introduced to control the gradient scale, and hence the effective learning rate, in the decoder layer. 
All decoder weights should be initialized iid from $\mathcal{N}(0,\sigma_{L+1}^2)$. 
An additional factor of $1/\sqrt{D}$ is used to keep the width scaling of the decoder gradients the same as the width scaling in the residual gradients; though this suppresses the scale of the output logits, their updates will remain order-one. 

Each of these encoder, MPNN, MHSA, and decoder modules can be chosen flexibly, so long as they do not significantly grow or shrink their inputs as $D$ increases (in which case we will refer to them as \textit{order-one modules}).

Crucially, to achieve hyperparameter transfer each of these layers needs a different effective learning rate $\eta_\ell$. 
We choose to use global learning rates, applied equally to all layers, and use rescaling factors $\sigma_\ell$ to adjust the effective learning rates in each layer as necessary.
By fixing the global learning rates to be those required in the residual layers, we need only use rescaling factors $\sigma_0$ and $\sigma_{L+1}$ in the encoder and decoder layers respectively. 

To update some model parameter $\theta$, we will use the optimization schemes 
\begin{equation}
    \Delta\theta\vert_\text{SGD}
    = 
    -\etasgd \nabla_\theta \mathcal L,
    \qquad
    \text{and}
    \qquad
    \Delta\theta\vert_\text{Adam} = -\etaadam \frac{\hat{m}_\theta}{\sqrt{\hat{v}_\theta} + \epsilon}
    \label{eq:optimizers}
\end{equation}
where for Adam $\hat{m}_\theta$ and $\hat{v}_\theta$ are the first and second bias-corrected gradient moments and $\epsilon$ is a small numerical constant. 
Our global learning rate parameterizations are 
\begin{equation}
    \etasgd = \eta_0DL
    \qquad
    \text{and}
    \qquad 
    \etaadam = \eta_0/\sqrt{D},
\label{eq:global_lrs}
\end{equation}

and the effective learning rates induced by the choices of $\sigma_\ell$ are shown in Table~\ref{tab:learning_rate_settings}. 
A full theoretical analysis of these scalings is given in Sec.~\ref{sec:scaling_analysis}.

\subsection{Learning Rate Transfer with SGD}
\label{sec:learning-rate-transfer-SGD}

In Prop.~\ref{th:update-stability-gd-lr-graph-regression}, we analyze one step of GD for 1D graph regression to argue that the per-layer learning rates must be 
\begin{equation*}
    \etasgd^{(0)} = \eta_0D\sigma_0^2\times \Cab,
    \qquad 
    \etasgd^{(1\leq\ell\leq L)} = \eta_0DL,
    \qquad
    \text{and}
    \qquad
    \etasgd^{(L+1)} = \eta_0D\sigma_{L+1}^2,
\end{equation*}
where to enact a global learning rate we chose $\sigma_0=\sigma_{L+1}=\sqrt{L}$, and $\Cab$ is a first-layer learning rate correction that arises to account for the differences between an initial graph $\mathcal{G}_\alpha$ used to update the weights under GD, and a new graph $\mathcal{G}_\beta$ that the updated model makes a prediction on.
If the graphs have numbers of nodes $N_\alpha$ and $N_\beta$, and node features $\mathbf{X}_\alpha$ and $\mathbf{X}_\beta$, then this one-step analysis shows that
\begin{equation}
    \Cab
    :=
    \frac{n_0\sqrt{N_\beta}}{M_{\alpha\beta}}
    \qquad
    \text{with}
    \qquad 
    M_{\alpha\beta}
    :=
    \norm{
        \frac{1}{N_\alpha}\one^T_{N_\alpha}
        \mathbf X_\alpha
        (\mathbf X_\beta)^T
    }_2, 
\label{eq:first-layer-correction}
\end{equation}
though experimentally we use an empirical average over many graphs, 
\begin{equation}
    \widehat{C}_{\alpha\beta}
    :=
    \mathrm{MEAN}_{\alpha,\beta}\bigg(
        \frac{n_0\sqrt{N_\beta}}{M_{\alpha\beta}}
    \bigg).
    \label{eq:empirical-Cab}
\end{equation}
Note that the form of $\Cab$ changes slightly if graph batches are used (Remark~\ref{th:batching-effects_sgd-graph-regression}) or if the task requires node-level outputs (Remark~\ref{th:batching-effects_gd-vertex-classification}), but in practice it is well-approximated by the forms above.
The quantity $\Mab$ crucially depends on dot products between the mean node feature on all of $\mathcal{G}_\alpha$ and the node features of $\mathcal{G}_\beta$. 
When the features are approximately aligned, therefore, $\Mab$ will be close to $n_0\sqrt{N_\beta}$.
However, if the features are sparse or one-hot, $\Mab$ will scale more like $\sqrt{N_\beta}/n_0$. 
For this reason, we emphasize that $\Cab$ will vary based on the specifics of the graph inputs.

In Table~\ref{tab:cab-per-dataset} we list the corresponding values $\widehat{C}_{\alpha\beta}$ values for the graph datasets explored in this paper.
We also include an empirical sparsity measure $\hat{s}_X$, defined as the per-graph sparsities
\begin{equation*}
    s_X := 1 - \frac{||\mathbf{X}||_0}{Nn_0}
\end{equation*}
averaged over the corresponding dataset, where the $\ell_0$ norm of of the node feature matrices $\mathbf{X}\in\R^{N\times n_0}$ counts the number of non-zero entries, 
\[
    \norm{\mathbf{X}}_0 =
    |\{(u,i)\ :\ \mathbf{X}_{u,i}\neq 0 \}|. 
\]
We observe that datasets with very sparse features, particularly citation networks with bag-of-words features drawn from large dictionaries, have higher values of $\widehat{C}_{\alpha\beta}$; note, however, that high sparsity does not guarantee features will be aligned or anti-aligned. 

\begin{table}[htb!]
  \centering
  \setlength{\tabcolsep}{10pt}
  \renewcommand{\arraystretch}{1.15}
  \begin{tabular}{lccccc}
    \toprule
    Dataset & $N$ & $n_0$ &  $s_X$ & $\widehat{M}_{\alpha\beta}$ & $\widehat{C}_{\alpha\beta}$ \\
    \midrule
    MNIST & 75.0 & 1 & 0.67 & 1.7  & 5.2 \\
    Pascal & 483.6 & 14 & 0.03 & 259  & 1.2 \\
    QM9 & 17.7 & 11 & 0.79 & 38.4 & 1.2 \\
    Cora & 2708 & 1433 & 0.99 & 3005 & 16.7 \\
    Citeseer & 3327 & 3703 & 0.99 & 7203 & 22.0 \\
    Pubmed & 19717 & 500 & 0.90 & 4990 & 13.5 \\
    \bottomrule
  \end{tabular}

  \vspace{2mm}

  \caption{
    Estimated gradient descent first-layer correction factors per dataset. Averages over 50 graph node counts, sparsities, $\widehat{M}_{\alpha\beta}$, and $\widehat{C}_{\alpha\beta}$ are reported for MNIST, Pascal, and QM9. Cora, Citeseer, and Pubmed are single graph datasets. 
  }
  \label{tab:cab-per-dataset}
\end{table}
 
For MNIST, Pascal, and QM9, we find $\widehat{C}_{\alpha\beta}$ is close to one; we suppress it and show that we can still achieve excellent learning rate transfer via SGD in Fig.~\ref{fig:sweep-lr_sgd}. 
Note that for stability in late training via SGD, we have used LayerNorm~\cite{ba2016layernormalization} with no trainable parameters on the inputs to the MPNN and MLP modules in Eqns.~\ref{eq:base-model-mpnn} and~\ref{eq:base-model-mlp}. 
We observe that without LayerNorm, models trained via SGD achieve stable performance during convergence, but eventually destabilize. 
In Fig.~\ref{fig:sweep-size}, we show how models of increasing size perform under SGD at the optimal values of $\eta_0$ identified in the transfer scan. 

For citation networks like Cora, Citeseer, and Pubmed with sparse bag-of-words features, we find much larger values of $\widehat{C}_{\alpha\beta}$, where first-layer learning rate corrections may be significant. 
In Figure~\ref{fig:first-layer-corr}, we show that using first-layer learning rate corrections to SGD without LayerNorm, at near-optimal values of $\eta_0$ (as determined by an auxiliary scan), significantly speeds up the training time for these citation network datasets. 
Note that the form of $\Cab$ suppresses effects at and above $L^{-2}$ (see Prop.~\ref{th:update-stability-gd-lr-graph-regression}), such that it should be treated as an approximate correction only. 
Scanning around it may yield improved performance.

\begin{figure}[h!]
\begin{centering}
    \includegraphics[width=0.85\textwidth]{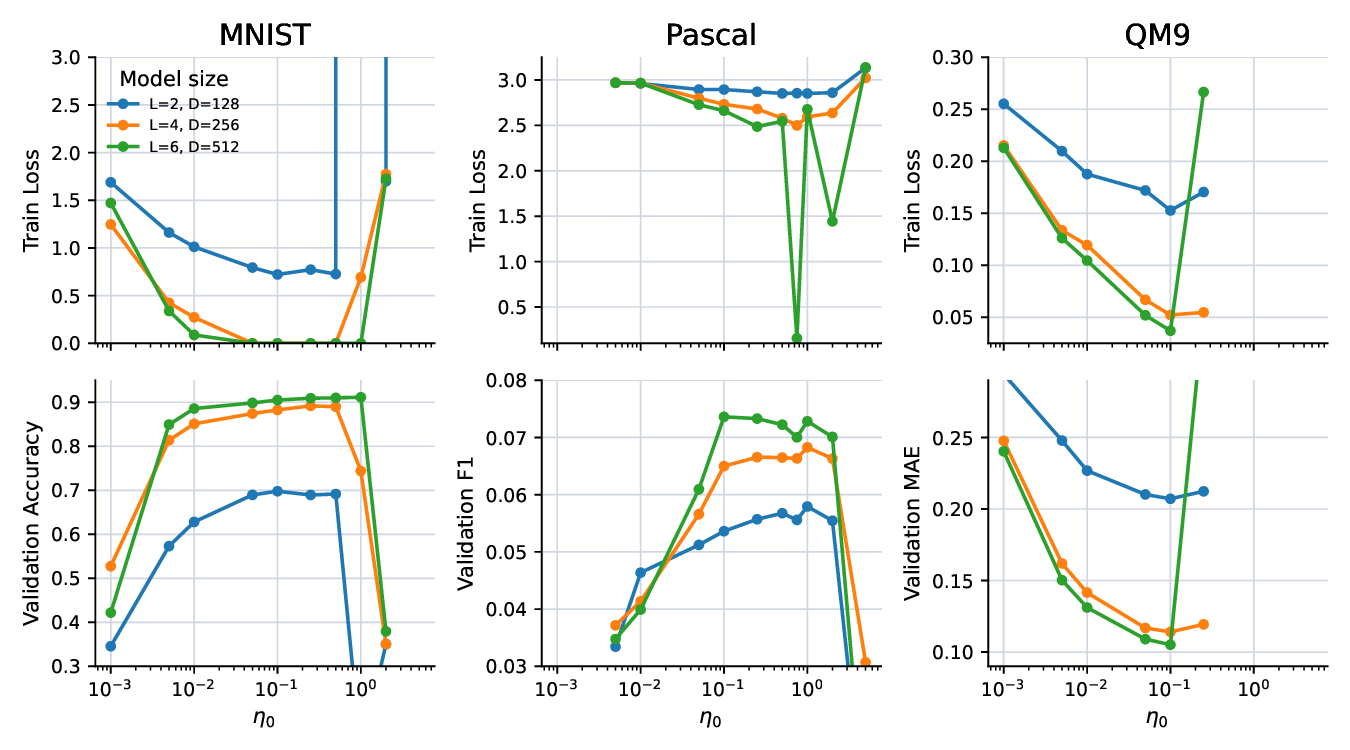}
    \caption{Transfer GNNs with symmetrically degree-normalized adjacency matrices ($\mathbf{P}=\tilde{\A}$) are trained via SGD on MNIST (left), Pascal (center), and QM9 (right). A batch size of 256 is used at all points; MNIST is trained for 400 epochs, Pascal is trained for 800 epochs, and QM9 is trained for 400 epochs. The best train loss (validation metric) attained during training is reported in the top (bottom) row.}
    \label{fig:sweep-lr_sgd}
\end{centering}
\end{figure}

\begin{figure}[h!]
\begin{centering}
    \includegraphics[width=0.85\textwidth]{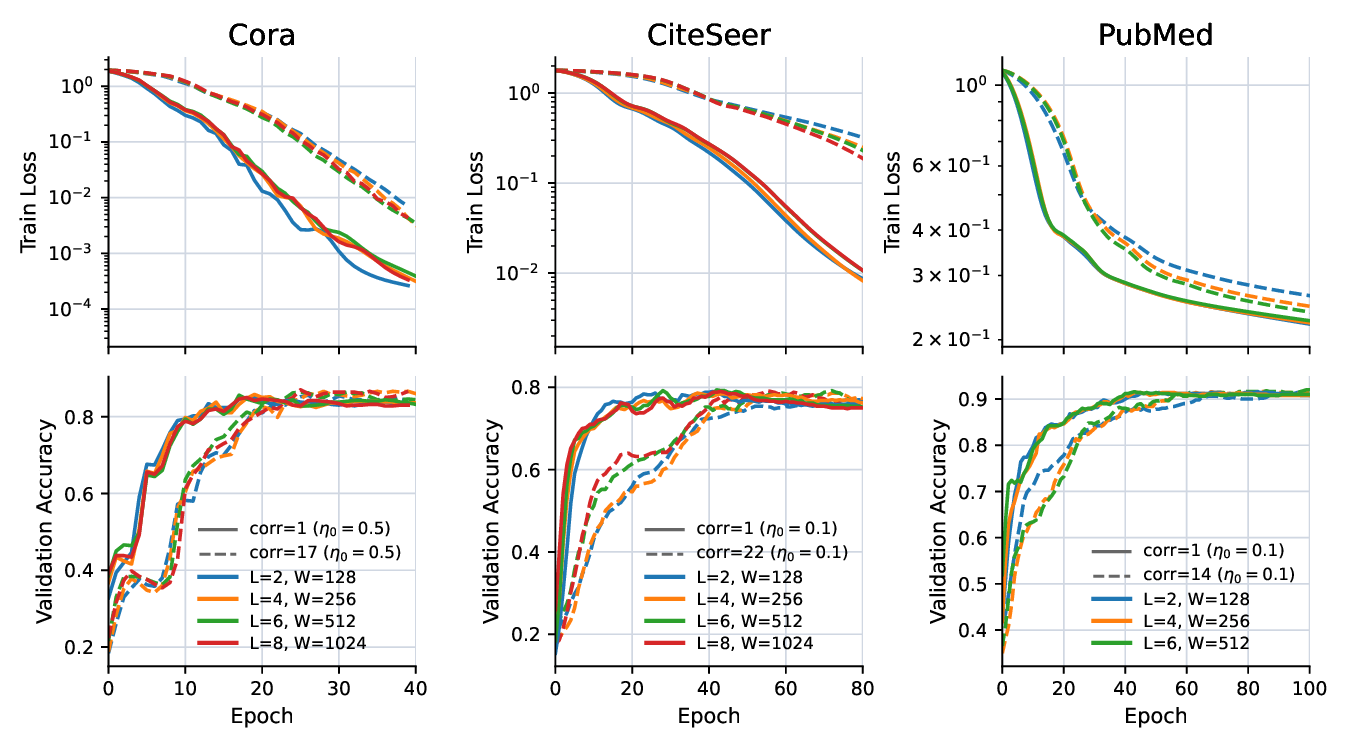}
    \caption{Transfer GNNs with symmetrically degree-normalized adjacency matrices ($\mathbf{P}=\tilde{\mathbf A}$) are trained with (dashed) and without (un-dashed) first-layer learning rate corrections on single-graph semi-supervised node classification datasets at optimal $\eta_0$ values determined in an auxilliary learning rate scan.}
    \label{fig:first-layer-corr}
\end{centering}
\end{figure}

\subsection{Learning Rate Transfer with Adam and AdamW}
\label{sec:learning-rate-transfer-adam}

We recover the Adam learning rates in Prop.~\ref{th:adam_lrs}, showing that per-layer they are
\[
    \etaadam^{(0)}=\eta_0\sigma_0,
    \qquad
    \etaadam^{(1\leq \ell\leq L)} = \frac{\eta_0}{\sqrt{D}},
    \qquad
    \text{and}
    \qquad
    \etaadam^{(L+1)} = \eta_0\sigma_{L+1},
\]
so that to use a global learning rate we choose $\sigma_0=1/\sqrt{D}$ and $\sigma_{L+1}=1$. 
Though the Adam $\epsilon$ parameter may indeed need its own transfer prescription (see Ref.~\cite{dey2025dontlazycompletepenables}), we suppress its effects by keeping it small ($10^{-14}$) in all experiments. 
We demonstrate learning rate transfer for Adam in Fig.~\ref{fig:sweep-lr_adam} and, at the optimal values of $\eta_0$ identified in a transfer scan, show how models of increasing size perform in Fig.~\ref{fig:sweep-size}. 

It was recently observed that parameter updates via AdamW~\citep{loshchilov2017adamw}, a variant of Adam with weight decay of strength $\lambda$ on each trainable parameter, can be viewed as an exponential moving average (EMA) with iteration timescale $\titer=(\lambda\eta)^{-1}$~\cite{wang2025setadamwsweightdecay}.
For a trainable parameter $\theta$, the EMA view of the AdamW update is 
\[ 
    \theta_{t+1} = (1-\titer^{-1})\theta_{t} + \titer^{-1} q_t
\]
where $q_t:=-\frac{1}{\lambda}\frac{\hat m_t}{\sqrt{\hat v_t}+\epsilon}$ with empirical first and second gradient moments $\hat{m}_t$ and $\hat{v}_t$ respectively, with a small constant $\epsilon$.
Importantly, there is evidence that $\epsilon$ may need to scale with model size~\cite{dey2025dontlazycompletepenables}; throughout our experiments we suppress its effect by setting it to $10^{-14}$. 
$\titer$ transfers as long as the product $\lambda\eta$ is invariant with respect to model size at fixed batch and dataset sizes; since $\eta=\eta_0/\sqrt{D}$ for Adam, weight decay should be parameterized as
\begin{equation}
    \lambda = \lambda_0\sqrt{D}
\label{eq:weight_decay}
\end{equation}
for some tunable base weight decay $\lambda_0$. 
It is convenient to rephrase $\titer$ in units of epochs. 
Given $\ntrain$ training examples and a batch size of $B$, there are approximately $M:=\ntrain/B$ gradient steps per epoch. Then, the iteration timescale in units of epochs is
\begin{equation}
   \tepoch := \frac{\titer}{M} =
    \frac{1}{\lambda_0\eta_0M}
    = 
    \frac{B}{\lambda_0\eta_0\ntrain}.
\label{eq:t_epoch}
\end{equation}
That is, for a given dataset with $\ntrain$ examples and fixed $\eta_0$ and $B$, the quantity $\tepoch$ (equivalently $\lambda_0\propto \tepoch^{-1}$) should transfer. 
In line with Ref.~\cite{wang2025setadamwsweightdecay}, we observe in Fig.~\ref{fig:tune-t-epoch} that the reasonable values of $\tepoch$ are consistent across a range of learning rates. 
Values of $\tepoch$ identified in this scan are fixed and used to perform an AdamW learning rate transfer scan in Fig.~\ref{fig:sweep-lr_adamw}. 
There, for non-negligible weight decay, we observe excellent transfer, and use the optimal $\eta_0$ values identified in the sweep to train models of increasing size in Fig.~\ref{fig:sweep-size}, demonstrating that performance improves with scale for AdamW in this configuration. 

\begin{figure}[h!]
\begin{centering}
    \includegraphics[width=0.8\textwidth]{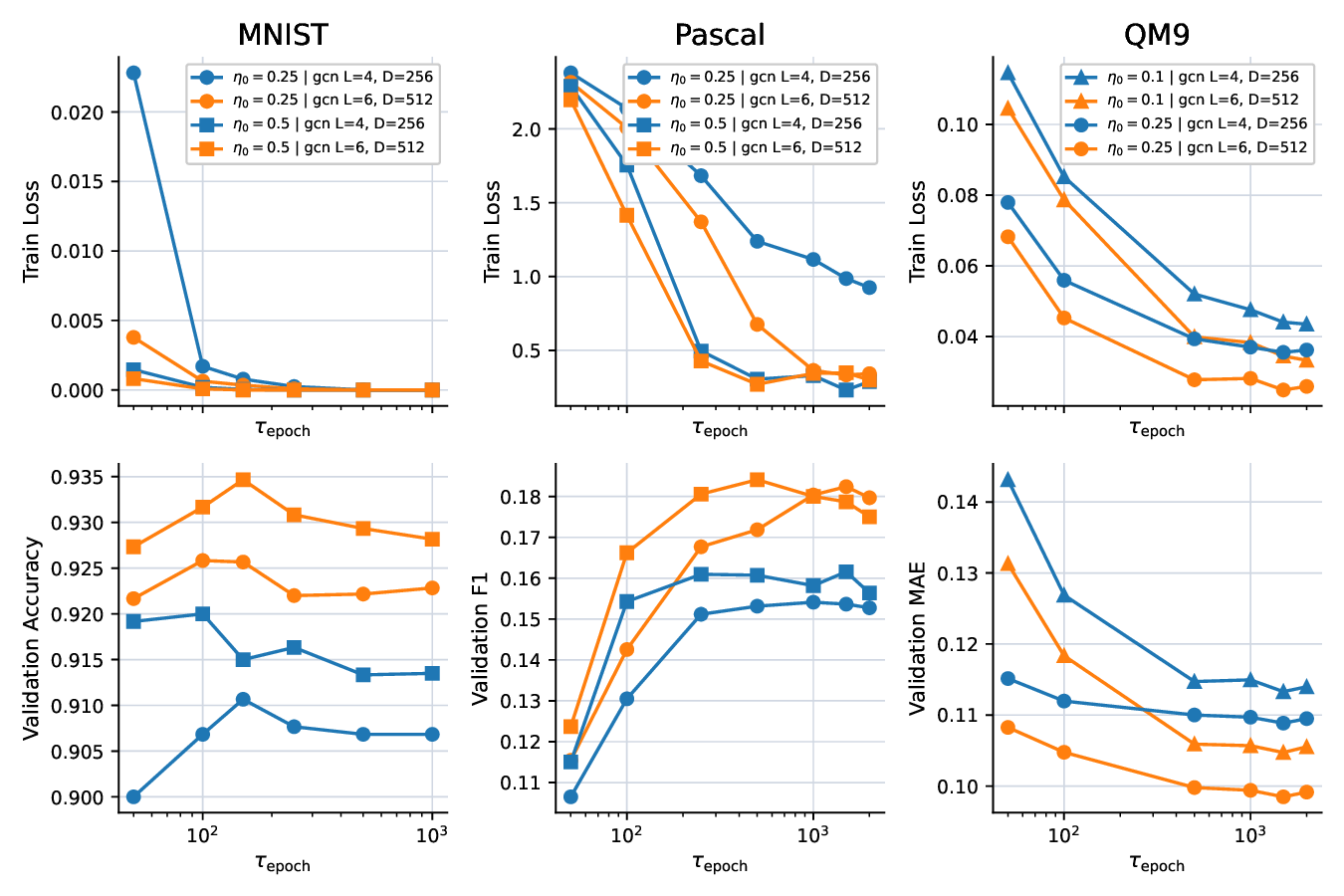}
    \caption{Transfer GNNs with symmetrically degree-normalized adjacency matrices ($\mathbf{P} = \tilde{\mathbf A}$) are trained via AdamW at various $\tau_\mathrm{epoch}$ working points, as a proxy for weight decay settings at fixed dataset size, batch size (256), and base learning rate. All models are trained for 400 epochs. 
}
    \label{fig:tune-t-epoch}
\end{centering}
\end{figure}

\begin{figure}[h!]
\begin{centering}
    \includegraphics[width=0.8\textwidth]{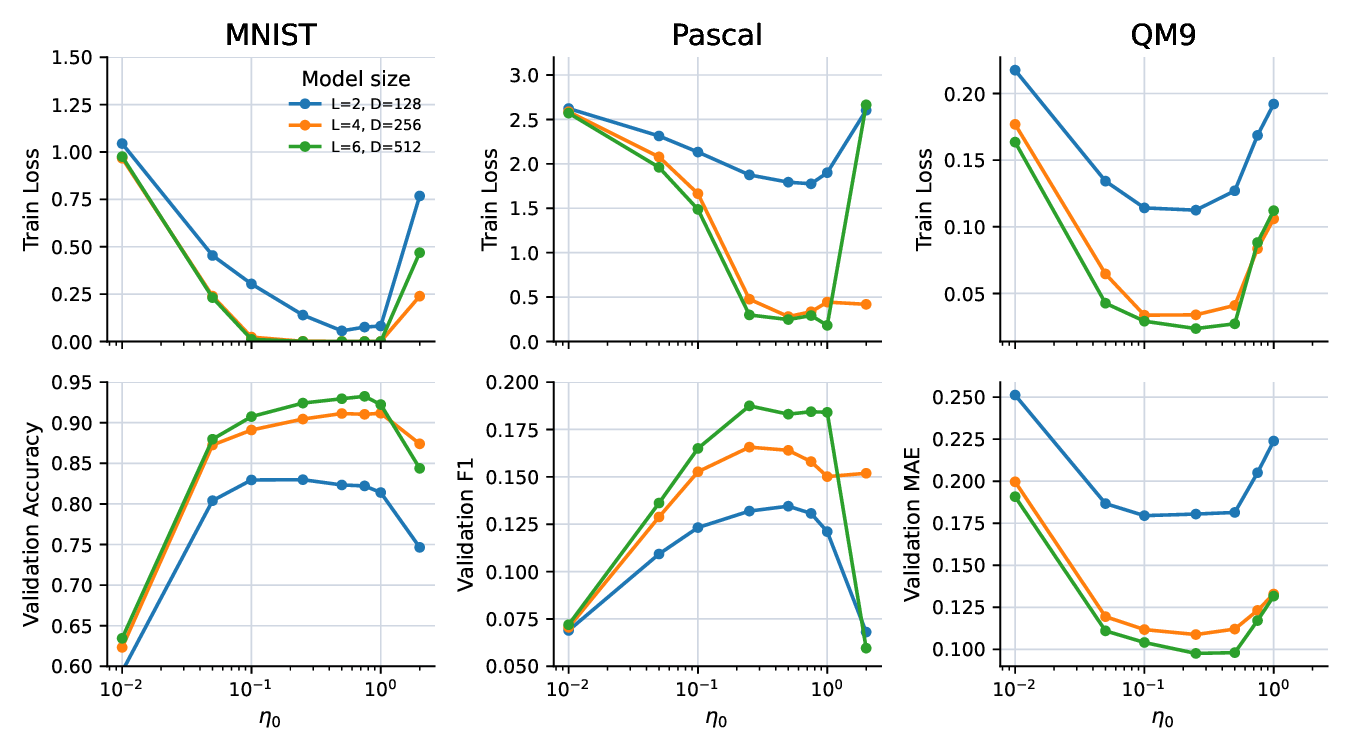}
    \caption{Transfer GNNs with symmetrically degree-normalized adjacency matrices ($\mathbf{P}=\tilde{\A}$) are trained via AdamW ($\epsilon=10^{-14}$) on MNIST (left), Pascal (center), and QM9 (right). A batch size of 256 is used at all points; MNIST ($\tepoch=150$) is trained for 400 epochs, Pascal ($\tepoch=1500$) is trained for 800 epochs, and QM9 ($\tepoch=500$) is trained for 500 epochs. The best train loss (validation metric) attained during training is reported in the top (bottom) row. }
    \label{fig:sweep-lr_adamw}
\end{centering}
\end{figure}

\newpage

\subsection{Normalization of the Message Passing Operator}
\label{sec:adjacency-normalization}

We have taken $\mpop=\tilde{\A}$ (see Eqn.~\ref{eq:generic-gnn-forward-pass}) in all experiments so far, so that message passing is symmetrically degree-weighted mean aggregation over each graph neighborhood. 
This normalization is important to achieve the numerical stability required for hyperparameter transfer; to illustrate, consider the shift in scale induced by $\mpop$ in a simple GNN forward pass when the weight entries are initialized iid from $\mathcal{N}(0,1)$,
\[
    \Exp{
        \norm{
            \frac{1}{\sqrt{D}}
            \mathbf{P}
            \X{\ell}
            \W{\ell+1}{}
        }_F^2
    } 
    = 
    \Exp{
        \norm{
            \mathbf{P}
            \X{\ell}
        }_F^2
    } 
    :=
    \gamma_\ell^2
    \ 
    \Exp{
        \norm{
            \X{\ell}
        }_F^2
    },
\]
where we have defined the shift factor 
\begin{equation}
    \gamma_\ell^2
    := 
    \frac{
        \Exp{
            \norm{
                    \mathbf{P}
                \X{\ell}
            }_F^2
        }
    }{
        \Exp{
            \norm{
                \X{\ell}
            }_F^2
        }
    }. 
    \label{eq:gamma-def}
\end{equation}
An unnormalized operator $\mathbf{P}=\A$, which facilitates sum aggregation across the graph as in, e.g., the graph isomorphism network (GIN)~\cite{xu2018gin}, will induce a larger $\gamma_\ell$ than a normalized message passing operator.
To mitigate this change in scale, we consider a normalized version of the GCN forward pass, general to any message passing operator $\mathbf{P}$, 
\begin{equation}
    \X{\ell+1}
    =
    \frac{1}{\gamma}
    \frac{1}{\sqrt{D}}
    \mathbf{P}
    \X{\ell}
    \W{\ell+1}{},
    \label{eq:normalized-gcn-update}
\end{equation}
where $\gamma$ is a constant chosen to compensate for changes in scale induced by $\mathbf{P}$. 
To investigate the experimental impact of message passing normalization, we apply three message passing normalization strategies models trained via Adam on the MNIST Superpixels (Fig.~\ref{fig:normalization-sweep_mnist}) and PascalVOC-SP datasets (Fig.~\ref{fig:normalization-sweep_pascal}):
\begin{enumerate}
    \item Degree normalization, with $\mathbf{P}=\tilde{\A}$ and $\gamma=1$. 
    \item No normalization, with $\mathbf{P}=\A$ and $\gamma=1$. 
    \item $\gamma$ normalization, with $\mathbf{P}=\A$ and $\gamma$ chosen to be an empirical average over the per-graph scale shifts $||\mpop\mathbf{X}||_F/||\mathbf{X}||_F$ for each given dataset. For MNIST Superpixels we find $\gamma=17$ and for PascalVOC-SP $\gamma=7$. 
\end{enumerate}
The MNIST Superpixels sweeps in Fig.~\ref{fig:normalization-sweep_mnist} show that in early training (10 epochs), only the normalized scenarios exhibit strong learning rate transfer around a narrow band of near-optimal $\eta_0$ values, and $\gamma$-normalization yields the best train and validation performance. 
Later in training (50 epochs), however, the range of near-optimal $\eta_0$ values broadens significantly for all three scenarios; the transfer remains sharp for the normalized scenarios and, notably, has improved for the unnormalized model. 
In the PascalVOC-SP sweeps (Fig.~\ref{fig:normalization-sweep_pascal}), the degree normalized approach yields the sharpest learning rate transfer during early training (50 epochs). 
Later in training, the $\gamma$-normalized model performs best, and both normalized approaches exhibit learning rate transfer. 
The unnormalized model, while performative, does not yield sharp hyperparameter transfer in early or late training.
These experiments highlight that it is important to normalize the message passing operator, and that the exact strength of the normalization should be scanned as a hyperparameter ($\gamma$) in the vicinity of the average scale shifts induced by $\mathbf P$.

\begin{figure}[h!]
\begin{centering}
    \includegraphics[width=0.8\textwidth]{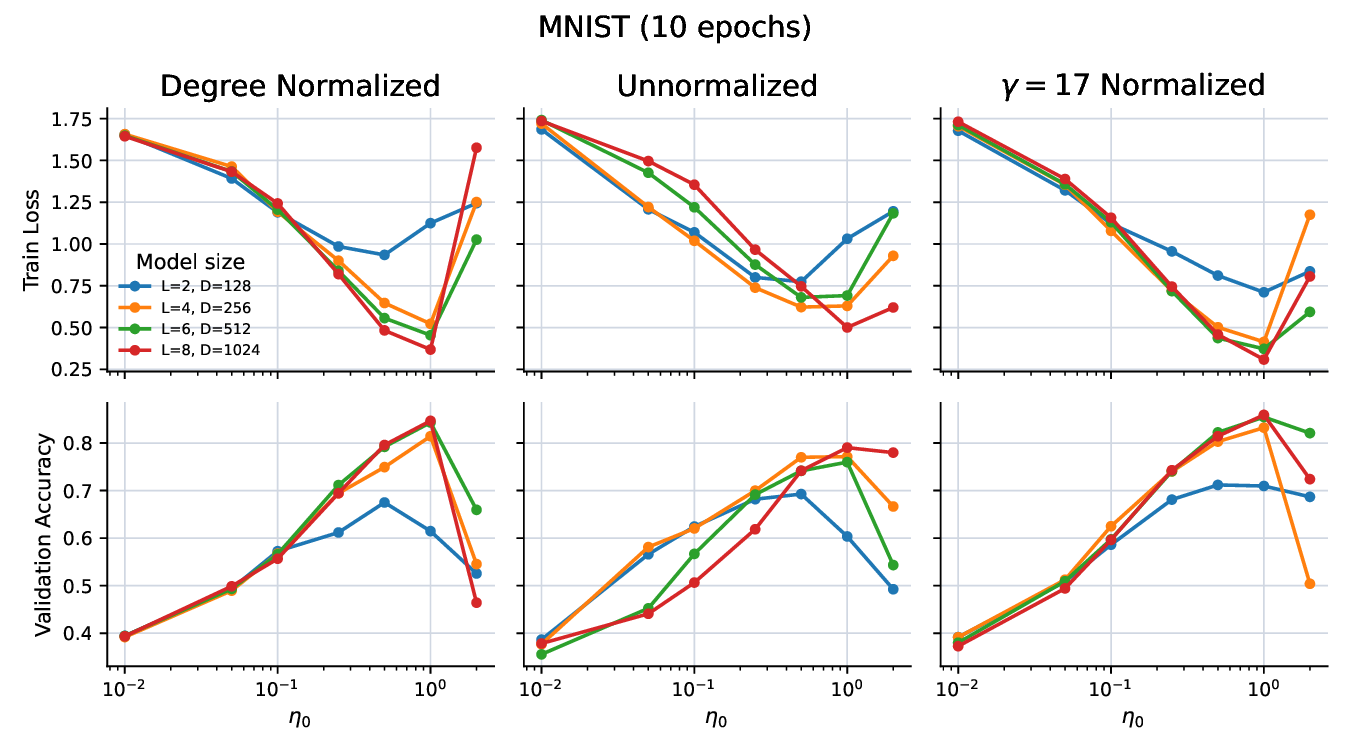}
    \includegraphics[width=0.8\textwidth]{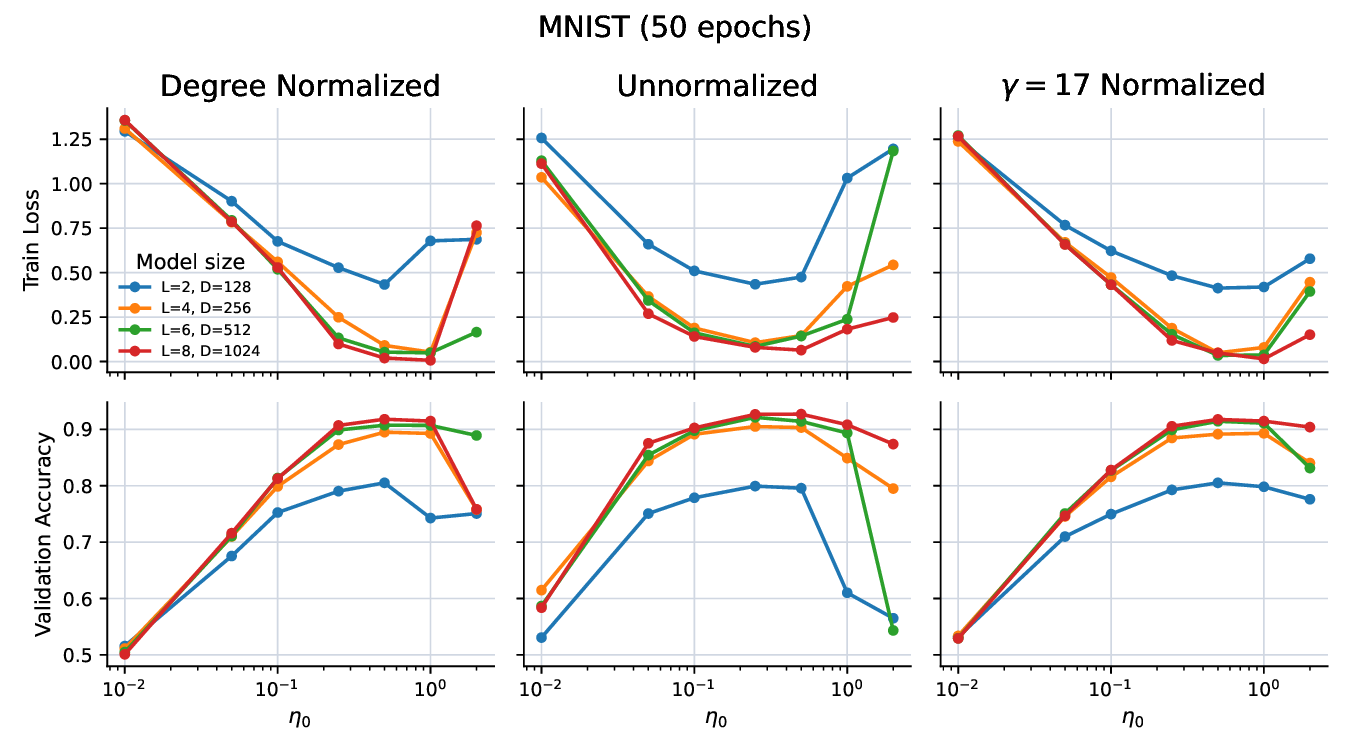}
    \caption{Models with degree normalization (left), no normalization (center), and $\gamma$ normalization (right) are trained via Adam ($\epsilon=10^{-14}$) on the MNIST Superpixels dataset.
    Early training performance (after 10 epochs) is shown in the upper panel and late training performance (after 50 epochs) in the lower panel. 
    The best training and validation scores achieved are reported in each of the upper and lower subpanels respectively.}
    \label{fig:normalization-sweep_mnist}
\end{centering}
\end{figure}

\begin{figure}[h!]
\begin{centering}
    \includegraphics[width=0.8\textwidth]{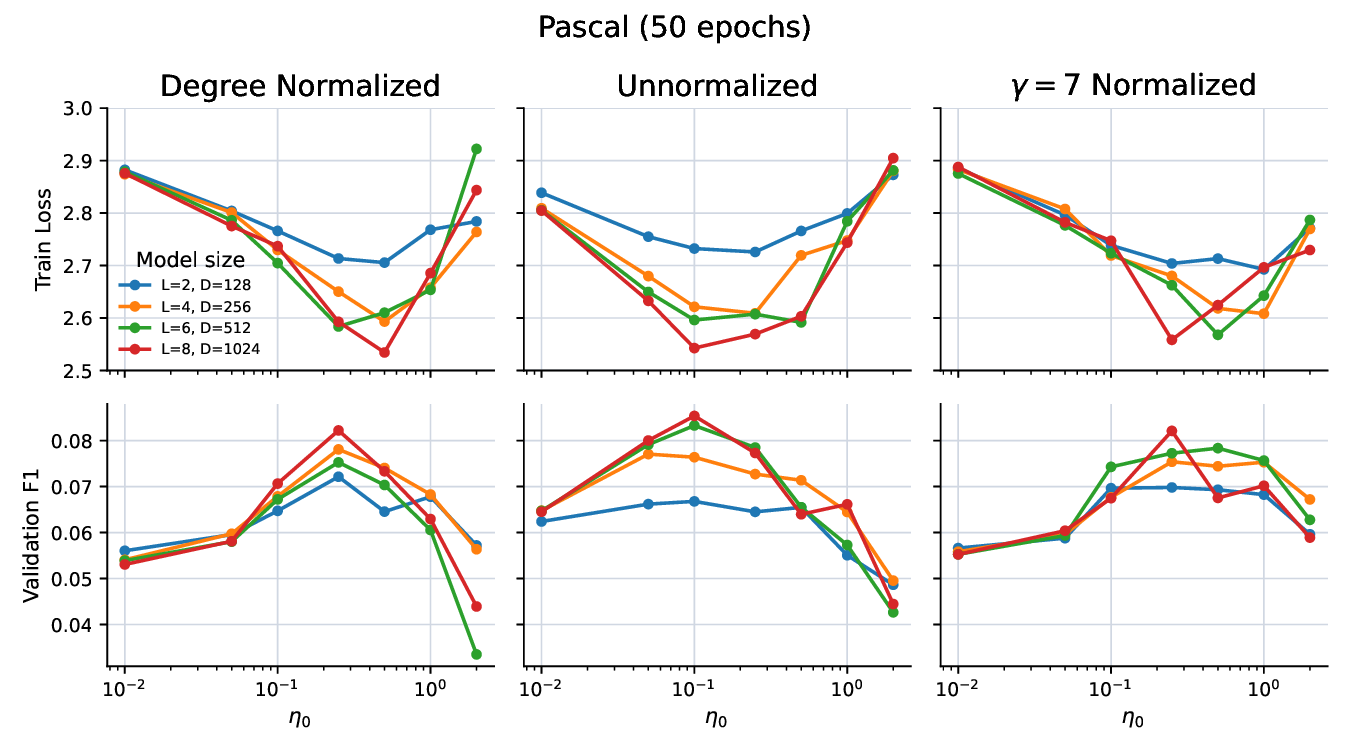}
    \includegraphics[width=0.8\textwidth]{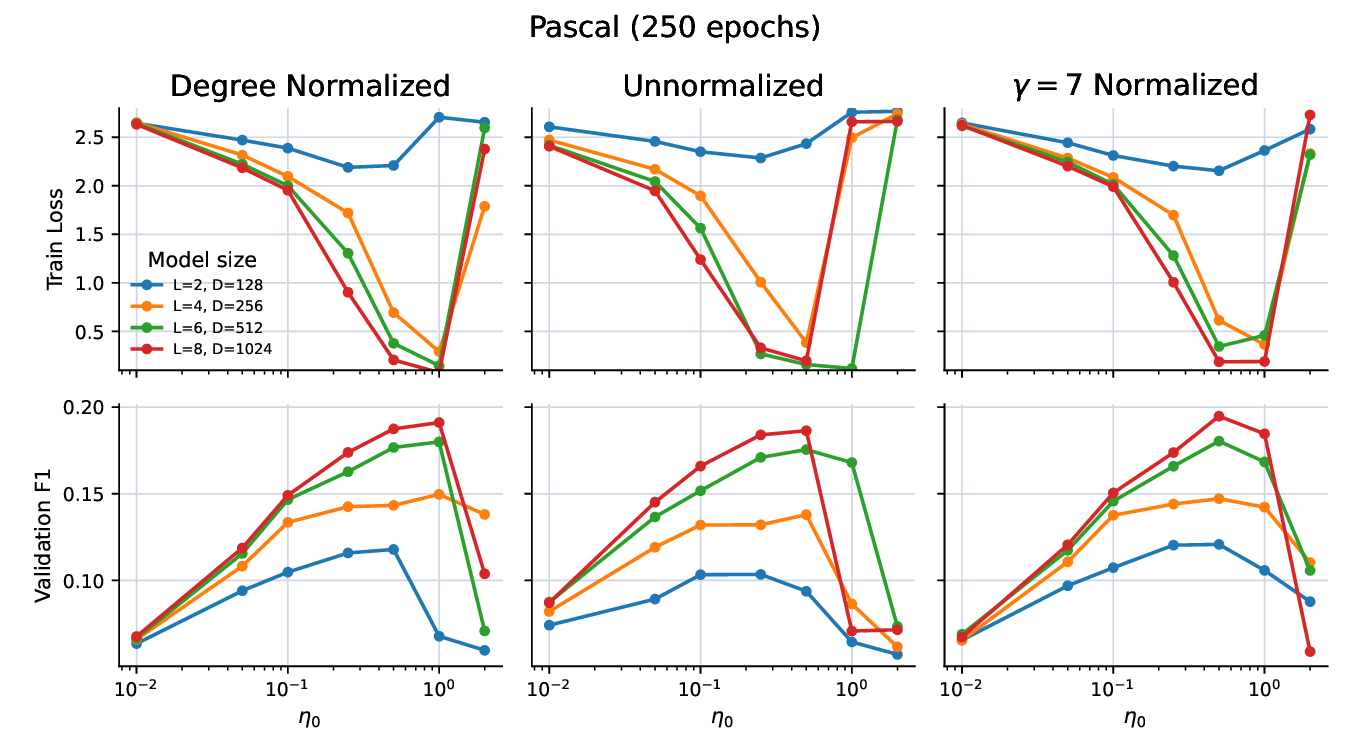}
    \caption{Models with degree normalization (left), no normalization (center), and $\gamma$ normalization (right) are trained via Adam ($\epsilon=10^{-14}$) on the PascalVOC-SP dataset.
    Early training performance (after 50 epochs) is shown in the upper panel and late training performance (after 250 epochs) in the lower panel. 
    The best training and validation scores achieved are reported in each of the upper and lower subpanels respectively.}
    \label{fig:normalization-sweep_pascal}
\end{centering}
\end{figure}

\section{Scaling Analysis}
\label{sec:scaling_analysis}

The following calculations show how the scalings in Eqns.~\ref{eq:base-model-encoder}-~\ref{eq:base-model-decoder} and Table~\ref{tab:learning_rate_settings} arise in GNNs.   
Broadly, our goal is to satisfy the desiderata for hyperparameter transfer laid out in Ref.~\cite{dey2025dontlazycompletepenables}, 
\begin{enumerate}
    \item \textbf{Feature Stability}: Features $\x{\ell}{u;i}$ should remain stable and order-one with respect to model size, i.e. $\x{\ell}{u}\sim\Theta_{D,L}(1)$ in each layer.  
    \item \textbf{Update Stability}: Feature updates $\Delta\x{\ell}{u;i}$ should remain stable and nontrivial with respect to model size. In the encoder and decoder layers, we require that $\Delta \x{0}{u;i}$ and $\Delta \x{L+1}{u;i}$ remain $\Theta_{D,L}(1)$. 
    In the residual stream, we require that $\Delta \x{\ell}{u;i}\sim\Theta_{D,L}(1/L)$ (for $1\leq\ell\leq L$), so that the total contribution of the residual stream remains $\Theta_{D,L}(1)$. 
    \item \textbf{Complete Learning}: Updates to features should not converge to their linearizations around any given parameter. In practice, this is satisfied by using $1/L$ depth scalings in each residual connection; see Ref.~\cite{dey2025dontlazycompletepenables}. 
\end{enumerate}

\subsection{Minimal Transfer GNN}
\label{sec:minimal-transfer-gnn}

To illustrate how the hyperparameter desiderata are satisfied by our base model in Eqns.~\ref{eq:base-model-encoder}-\ref{eq:base-model-decoder}, consider a \textit{minimal transfer GNN} in which no nonlinearities are applied. 
Throughout assume that the input node features are $\ell_2$ normalized to $n_0$ as 
\begin{equation}
    ||\mathbf{x}_u||_2^2=n_0 
    \qquad
    \rightarrow
    \qquad 
    \frac{1}{Nn_0}||\mathbf{X}||_F^2 = 1,
\label{eq:input-normalization}
\end{equation}
so that the RMS input entry is order-one. 
The first-layer, corresponding to Eqn.~\ref{eq:base-model-encoder}, is a simple linear encoder 
\begin{equation}
    \X{0} = 
    \frac{1}{\sigma_0\sqrt{n_0}}
    \mathbf{X}\W{0}{}
    \label{eq:simple-encoder}
\end{equation}
with $\W{0}{ij}\sim\mathcal{N}(0,\sigma_0^2)$. 
The residual stream, corresponding to Eqns.~\ref{eq:base-model-mpnn} and~\ref{eq:base-model-mlp}, will use a GCN as the MPNN module,
\begin{equation}
    \Xt{\ell+1} = 
    \mathbf{X}^{(\ell)} + 
    \frac{1}{L}
    \bigg(
        \frac{1}{\sqrt{D}}
        \frac{1}{\gamma_\ell}
        \mpop\X{\ell}\Wt{\ell+1}
    \bigg)
    \label{eq:simple-mpnn}
\end{equation}
with a generalized message passing operator $\mpop\in\R^{N\times N}$, weights $\Wt{\ell+1}_{ij}\sim\mathcal{N}(0,1)$, and factors of
\begin{equation}
    \gamma_\ell^2 := 
    \Exp{\norm{\mpop\X{\ell}}_F^2}
    / \ 
    \Exp{\norm{\X{\ell}}_F^2}
    \label{eq:gamma_ell}
\end{equation}
introduced to correct for changes in scale induced by $\mpop$. 
Note that for practical purposes, the $\gamma_\ell$ factors can be treated as a single hyperparameter $\gamma$ (see Eqn.~\ref{eq:normalized-gcn-update}). 
Our empirical results show that using a normalized message passing operator like $\tilde{\mathbf A}$ with $\gamma=1$ yields excellent learning rate transfer. 
Next, we take an MLP with no nonlinearities and one hidden layer of width $4D$,
\begin{equation}
    \X{\ell+1} 
    = 
    \Xt{\ell+1}
    + 
    \frac{1}{L}
    \bigg(
        \frac{1}{\sqrt{4D}}
        \bigg(
            \frac{1}{\sqrt{D}}
            \Xt{\ell+1}
            \W{\ell+1}{1}
        \bigg)
        \W{\ell+1}{2}
    \bigg)
    \label{eq:simple-mlp}
\end{equation}
with $(\W{\ell+1}{1})_{ij}, (\W{\ell+1}{2})_{ij}\sim\mathcal{N}(0,1)$.
Finally, we take 1-D outputs using a simple linear decoder, either per-graph
\begin{equation}
    z = 
    \frac{1}{D\sigma_{L+1}}\frac{1}{N}
    \one^T\mathbf{X}^{(L)}\W{L+1}{}
    \in\R 
    \label{eq:simple_decoder-graph}
\end{equation}
or per-node
\begin{equation}
    \mathbf Z = 
    \frac{1}{D\sigma_{L+1}}\mathbf{X}^{(L)}\W{L+1}{}
    \in\R^{N}
    \label{eq:simple-decoder-node}
\end{equation}
with $\W{L+1}{ij}\sim\mathcal{N}(0,\sigma_{L+1}^2)$.

\subsection{Feature Stability}
\label{sec:scaling-analysis_feature-stability}

Desiderata 1 demands numerical stability in the forward pass, in the sense that the model's activations remain order-one with respect to model size. 

\begin{proposition}
\label{th:feature_stability}
    \textbf{(Feature Stability)}
    Fix an input graph $\mathcal{G} = (\mathcal{V}, \mathcal{E})$ equipped with an adjacency matrix $\mathbf{A} \in \mathbb{R}^{N\times N}$ and node features $\mathbf{X} \in \mathbb{R}^{N \times n_0}$ normalized for every $u \in \mathcal{V}$ as $\|\mathbf{x}_u\|^2_2 = n_0$. Then, for the minimal transfer GNN,
    \begin{itemize}
        \item The encoded features $\x{0}{v;i}$ satisfy $\Theta_{D,L}(1)$ scaling.
        \item The residual features $\x{1\leq \ell\leq L}{v;i}$ satisfy $\Theta_{D,L}(1)$ scaling.
        \item The scalar output $x^{(L+1)}$ satisfies $z = \Theta\left( M_L/D\right)$ scaling, where
        \begin{equation}
            M_\ell := 
            \left(
                \Exp{ 
                    \left\|
                        \frac{1}{N}
                        \mathbf{1}^\top
                        \X{\ell}{}
                    \right\|_2^2
                }
            \right)^{1/2}.
            \label{eq:m_ell}
        \end{equation}
    \end{itemize}
\end{proposition}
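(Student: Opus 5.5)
We track second moments of features layer by layer at initialization, using the Gaussianity of the weights conditional on earlier layers. Since all weights are independent and centered, cross terms vanish in expectation. Every layer therefore reduces to a recursion for $\Exp{\norm{\X{\ell}}_F^2}$ or, entrywise, for $\Exp{(\x{\ell}{v;i})^2}$.

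\textbf{Encoder.} By Eqn.~\ref{eq:simple-encoder}, $\x{0}{v;i} = \frac{1}{\sigma_0\sqrt{n_0}}\sum_j \mathbf{X}_{vj}\W{0}{ji}$ with $\W{0}{ji}\sim\mathcal N(0,\sigma_0^2)$. This entry is exactly Gaussian with variance $\frac{1}{n_0}\|\mathbf x_v\|_2^2 = 1$ under the normalization Eqn.~\ref{eq:input-normalization}. It is therefore $\Theta(1)$, independent of $D$ and $L$, and the $\sigma_0$ factors cancel.

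\textbf{Residual stream.} Condition on $\X{\ell}$. The MPNN increment $\frac{1}{L\gamma_\ell\sqrt D}\mpop\X{\ell}\Wt{\ell+1}$ has mean zero and is uncorrelated with $\X{\ell}$. It follows that
\begin{equation*}
\Exp{\norm{\Xt{\ell+1}}_F^2} = \Exp{\norm{\X{\ell}}_F^2} + \frac{1}{L^2\gamma_\ell^2}\Exp{\norm{\mpop\X{\ell}}_F^2} = \Big(1+\frac{1}{L^2}\Big)\Exp{\norm{\X{\ell}}_F^2},
\end{equation*}
where the second equality uses exactly the definition of $\gamma_\ell$ in Eqn.~\ref{eq:gamma_ell}.

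For the MLP, I condition on $\Xt{\ell+1}$ and first average over $\W{\ell+1}{2}$ and then over $\W{\ell+1}{1}$. Each entry of $\frac{1}{\sqrt{4D}}\frac{1}{\sqrt D}\Xt{\ell+1}\W{\ell+1}{1}\W{\ell+1}{2}$ then has second moment $\frac{1}{D}\|\tilde{\mathbf x}_v\|^2$. This again gives a factor $(1+1/L^2)$.

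Iterating over $L$ layers gives a total growth factor $(1+L^{-2})^{2L}\to 1$, which lies in $[1,e^{2/L}]\subset[1,e^2]$. Hence $\Exp{\norm{\X{\ell}}_F^2}/(ND)=\Theta_{D,L}(1)$ for all $1\le\ell\le L$. The per-entry statement follows by exchangeability over the width index $i$. I will also need that each node's features stay $\Theta(1)$, not just their average over nodes. For this I track the node-wise vector $\big(\Exp{\|\mathbf x^{(\ell)}_v\|^2}\big)_v$: the increments are at most of order $\norm{\mpop}^2/(L^2\gamma_\ell^2)$ times these quantities. This is the step where I expect mild trouble, because $\gamma_\ell$ normalizes only the aggregate. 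I would handle it by bounding the node-wise growth by $\exp(C/L)$ with $C$ a graph-dependent constant, such as $\|\mpop\|_{\infty}^2/\gamma^2$, which is independent of $D$ and $L$. The lower bound is immediate, because the increments only add variance.

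\textbf{Output.} In Eqn.~\ref{eq:simple_decoder-graph}, write $z=\frac{1}{D\sigma_{L+1}}\sum_i\big(\frac1N\one^T\X{L}\big)_i\W{L+1}{i}$. Conditionally on $\X{L}$ this is Gaussian with variance $\frac{1}{D^2}\big\|\frac1N\one^T\X{L}\big\|_2^2$. Taking expectations gives $\Exp{z^2}^{1/2}=M_L/D$, which is the claim. One can further note that $M_L$ is at most $\sqrt D$ times a constant, so $z$ is suppressed as the paper remarks. The statement, however, only asks for the $M_L/D$ form, so no further estimate of $M_L$ is needed.
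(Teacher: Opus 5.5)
Your proposal is correct and follows the paper's proof step for step. Both use the same second-moment recursion, with cross terms vanishing because the weights are independent and centered. The definition of $\gamma_\ell$ gives a factor $(1+L^{-2})$ for each MPNN and MLP sub-block, and the decoder reduces to the same conditional-variance computation $\Exp{z^2}=M_L^2/D^2$. Your additions make rigorous what the paper treats heuristically: exact unit-variance Gaussian entries at the encoder, exchangeability over the width index, the node-wise bound, and the cumulative estimate $(1+L^{-2})^{2L}\le e^{2/L}$, which is more accurate than the paper's ``$ND+\mathcal{O}(L^{-2})$''.
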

\begin{proof} We will address the encoder, residual, and decoder layers in sequence. 

\paragraph{Encoder Layer} The outputs of the encoder layer have size
    \[
        \Exp{\norm{\X{0}}_F^2} 
        = 
        \frac{1}{\sigma_0^2 n_0}
        \Exp{\norm{\mathbf{X}\W{0}{}}_F^2}
        =
        \frac{D}{n_0}
        \Exp{\norm{\mathbf{X}}_F^2} =
        ND,
    \]
where we used the initialization scale $\W{0}{ij}\sim\mathcal{N}(0,\sigma_0^2)$ and the normalization $||\mathbf{x}_u||^2_2=n_0$. 
Because $\X{0}$ contains $ND$ activations, each activation satisfies $||\x{0}{v;i}||\sim\Theta_{D,L}(1)$. 

\paragraph{Residual Layers} The outputs of each residual layer have size, 
    \begin{align*}
        \Exp{\norm{\X{\ell+1}}_F^2}
        &= 
        \Exp{\norm{
            \Xt{\ell+1}
            \bigg(
                \ID_D 
                + 
                \frac{1}{L}
                \frac{1}{\sqrt{D}}
                \W{\ell+1}{1}
                \frac{1}{\sqrt{4D}}
                \W{\ell+1}{2}
            \bigg)
        }_F^2}
        \\
        &=
        \Exp{\norm{\Xt{\ell+1}}_F^2}
        \bigg(
            1 + \frac{1}{L^2}
        \bigg)
    \end{align*}
    where the cross terms containing only one copy of each weight matrix vanish in expectation.
    Then, because
    \begin{align*}
        \Exp{\norm{\Xt{\ell+1}}_F^2}
        &= 
        \Exp{\norm{\X{\ell}}_F^2} 
        + 
        \frac{1}{L^2}
        \frac{1}{\gamma_\ell^2}
        \Exp{\norm{
            \mpop 
            \X{\ell}
            \frac{1}{\sqrt{D}}
            \Wt{\ell+1}
        }_F^2}
        \\
        &= 
        \Exp{\norm{\X{\ell}}^2_F} +
        \frac{1}{L^2}
        \frac{1}{\gamma_\ell^2}
        \Exp{\norm{\mpop\X{\ell}}_F^2}
        \\
        &= 
        \Exp{\norm{\X{\ell}}^2_F}
        \bigg(
            1 + \frac{1}{L^2}
        \bigg),
    \end{align*}
    the scale of features in the residual stream satisfy a simple recursion in $\ell$:
    \begin{align*}
        \Exp{\norm{\X{\ell+1}}_F^2}
        &= 
        \Exp{\norm{\X{\ell}}^2_F}
        \bigg(
            1 + \frac{1}{L^2}
        \bigg)^2
        \\
        &\sim 
        ND + \mathcal{O}(L^{-2}),
    \end{align*}
    which guarantees RMS stability and practically keeps a typical activation $||\x{\ell}{v;i}||\sim\Theta_{D,L}(1)$ for $1\leq\ell\leq L$.
    Note that 

    \paragraph{Decoder Layer}
    Assuming, without loss of generality, a graph-regression task with one target, the outputs have a suppressed scale at init,
    \begin{align*}
        \Exp{(x^{(L+1)})^2} = 
        \frac{1}{\sigma_{L+1}^2D^2}
        \Exp{
            \bigg(
                \frac{1}{N}\one^T \X{L}\W{L+1}{}
            \bigg)^2
        }
        = 
        \frac{1}{D^2}
        \Exp{\norm{\frac{1}{N}\one^T \X{L}}_2^2}
        =
        \frac{M_L^2}{D^2},
    \end{align*}
    so that that $x^{(L+1)}\sim\Theta(M_L/D)$.
\end{proof}

\subsection{Update Stability}
\label{sec:scaling-analysis-update-stability}

We now investigate the scale of updates to the network's features.
In the following calculations we'll consider making a prediction on  graph $\mathcal{G}_\alpha=(\vset_\alpha, \eset_\alpha)$, backpropagating the results to update the model's weights, and then making a new prediction on a second graph $\mathcal{G}_\beta=(\vset_\beta, \eset_\beta)$.
We'll denote the number of nodes in each graph $N_\alpha:=|\vset_\alpha|$ and $N_\beta:=|\vset_\beta|$ respectively. Similarly, their truth targets are $y_\alpha$ and $y_\beta$, their node features are $\mathbf{X}_\alpha\in\R^{N_\alpha \times n_0}$ and $\mathbf{X}_\beta\in\R^{N_\beta \times n_0}$ (with the features belonging to each node $\ell_2$ normalized to $n_0$ as in Prop.~\ref{th:feature_stability}), and their adjacency matrices are $\A_\alpha\in\mathbb{R}^{N_\alpha\times N_\alpha}$ and $\A_\beta\in\mathbb{R}^{N_\beta\times N_\beta}$. 

\begin{proposition}
\label{th:update-stability-gd-lr-graph-regression}
    (\textbf{GD Learning Rates})
    After one step of training a minimal transfer GNN on $\mathcal{G}_\alpha$ via GD with a quadratic loss, network features in the encoder, residual, and decoder layers receive $\Theta(1)$ updates after processing a second graph $\mathcal{G}_\beta$ if
    \begin{itemize}
        \item The learning rate in the first-layer is 
        \begin{equation}
            \etasgd^{(0)} = \eta_0D\sigma_0^2 \times\Cab
            \ \ 
            \text{with}
            \ \ 
            \Cab:=\frac{n_0\sqrt{N_\beta}}{\Mab}
            \label{eq:lr_sgd_first}
        \end{equation}
        where 
        \begin{equation}
            \Mab := \norm{
                \frac{1}{N_\alpha}
                \one_{N_\alpha}^T
                \mathbf{X}_\alpha
                (\mathbf{X}_\beta)^T
            }_2
            \label{eq:Mab}
        \end{equation}
        measures the node feature alignment of $\mathcal G_\alpha$ and $\mathcal G_\beta$. 
        \item The learning rate in the residual layers is 
        \begin{equation}
            \etasgd^{(\ell)} = \eta_0DL.
            \label{eq:eta_sgd_resid}
        \end{equation}
        \item The learning rate in the last layer is 
        \begin{equation}
            \etasgd^{(L+1)} = \eta_0 D\sigma_{L+1}^2. 
        \end{equation}
    \end{itemize}
    Though different learning rates are required in each of the encoder, residual, and decoder layers, one may use a global effective learning rate $\etasgd=\eta_0DL$ with the choices $\sigma_0=\sqrt{L}$ and $\sigma_{L+1}=\sqrt{L}$, assuming that $\Cab$ is close enough to 1. If $\Cab$ varies significantly from 1, a separate learning rate should be used in the encoder layer, whose correction factor can be treated as a hyperparameter scanned in the neighborhood of $\Cab$. 
\end{proposition}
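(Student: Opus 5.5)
Since the statement is a one-step scaling claim, I will compute the GD update for each weight tensor from the loss $\mathcal L = \tfrac12(z_\alpha - y_\alpha)^2$ on $\mathcal G_\alpha$, with $z_\alpha$ the per-graph output of Eqn.~\ref{eq:simple_decoder-graph}. I will then evaluate the first-order change $\Delta\x{\ell}{}$ of the features of $\mathcal G_\beta$ induced by each weight update. Throughout I will use that at initialization the residual stream is $\X{\ell}=\X{0}(\ID+O(1/L))$, by the recursion in Prop.~\ref{th:feature_stability}, so to leading order in $1/L$ every $\X{\ell}$ may be replaced by $\X{0}$. I will also use that the residual $r_\alpha := z_\alpha-y_\alpha$ is $\Theta(1)$, since $z_\alpha=\Theta(M_L/D)$ is suppressed at initialization and so $r_\alpha\approx -y_\alpha$. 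The backpropagated signal to the residual stream is $\partial\mathcal L/\partial \X{L}_\alpha = r_\alpha \frac{1}{D\sigma_{L+1}N_\alpha}\one\,(\W{L+1}{})^T$, whose entries are $\Theta(1/(DN_\alpha))$ because $\W{L+1}{}\sim\mathcal N(0,\sigma_{L+1}^2)$. The order of the argument is decoder, then residual layers, then encoder.

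\textbf{Decoder.} The gradient is $\nabla_{\W{L+1}{}}\mathcal L = \frac{r_\alpha}{D\sigma_{L+1}}(\frac1{N_\alpha}\one^T\X{L}_\alpha)^T$, so the output change on $\mathcal G_\beta$ is
\[
\Delta z_\beta=-\etasgd^{(L+1)}\frac{r_\alpha}{D^2\sigma_{L+1}^2}\Big\langle \tfrac1{N_\beta}\one^T\X{L}_\beta,\ \tfrac1{N_\alpha}\one^T\X{L}_\alpha\Big\rangle .
\]
The inner product of the two $D$-dimensional mean features is $\Theta(D)$ in the correlated regime, which gives $\Delta z_\beta=\Theta(\eta_0)$ exactly when $\etasgd^{(L+1)}=\eta_0D\sigma_{L+1}^2$.

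\textbf{Residual layers.} Take the MPNN weight $\Wt{\ell+1}$. Its gradient is $\frac{1}{L\gamma_\ell\sqrt D}(\mpop\X{\ell}_\alpha)^T\,\partial\mathcal L/\partial\Xt{\ell+1}_\alpha$. The induced change on $\mathcal G_\beta$ is $\frac{1}{L\gamma_\ell\sqrt D}\mpop_\beta\X{\ell}_\beta\Delta\Wt{\ell+1}$. This contracts $\X{\ell}_\beta(\X{\ell}_\alpha)^T\mpop_\alpha^T$, which is $\Theta(D)$ per entry, against $\one(\W{L+1}{})^T$. Collecting the factors gives
\[
\Delta\x{\ell}{}=\Theta\big(\eta^{(\ell)}\cdot L^{-2}\cdot D^{-1}\cdot D\cdot D^{-1}\big)=\Theta\big(\eta^{(\ell)}/(DL^2)\big),
\]
where the two factors of $1/L$ come from the forward and backward residual branches, and the $\sigma_{L+1}$ factors cancel between the backward signal and the weight scale. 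Requiring $\Theta(1/L)$ gives $\etasgd^{(\ell)}=\eta_0DL$. The two MLP matrices are handled identically, since each adds one extra $\frac1{\sqrt D}\W{}{}$ factor whose $\sqrt D$ is cancelled by the $\Theta(D)$ contraction of its partner.

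\textbf{Encoder.} This is the main obstacle, because the input features live in dimension $n_0$ and are not averaged over width, so the contraction does not produce a generic $\Theta(D)$ factor. The gradient is $\frac{1}{\sigma_0\sqrt{n_0}}\mathbf X_\alpha^T\,\partial\mathcal L/\partial\X{0}_\alpha$, and the backward signal through the near-identity residual stream is $\approx r_\alpha\frac{1}{D\sigma_{L+1}N_\alpha}\one(\W{L+1}{})^T$. The update on $\mathcal G_\beta$ is therefore
\[
\Delta\X{0}_\beta=-\etasgd^{(0)}\frac{r_\alpha}{\sigma_0^2n_0D\sigma_{L+1}}\Big(\mathbf X_\beta\mathbf X_\alpha^T\tfrac1{N_\alpha}\one\Big)(\W{L+1}{})^T .
\]
The vector $\mathbf X_\beta\mathbf X_\alpha^T\frac1{N_\alpha}\one\in\R^{N_\beta}$ has norm $\Mab$, so its RMS entry is $\Mab/\sqrt{N_\beta}$, and each entry of $(\W{L+1}{})^T/\sigma_{L+1}$ is $\Theta(1)$. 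Demanding $\Theta(1)$ entries then forces
\[
\etasgd^{(0)}=\eta_0 D\sigma_0^2\, n_0\sqrt{N_\beta}/\Mab=\eta_0D\sigma_0^2\Cab .
\]
Here I will be explicit that corrections from passing through the $L$ residual blocks enter only at $O(L^{-1})$ and higher and are dropped. The care needed is in tracking which quantities are random and which are fixed data.

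\textbf{Global learning rate.} Substituting $\sigma_0=\sigma_{L+1}=\sqrt L$ makes all three prescriptions equal to $\eta_0DL$, up to the factor $\Cab$ in the encoder. This yields the global-rate remark, with $\Cab$ retained as a separately scanned encoder correction when it departs from $1$.
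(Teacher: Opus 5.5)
Your proposal is correct and follows essentially the same route as the paper: chain-rule gradients through a residual stream treated as order-one, with message passing absorbed by $\gamma_\ell$; the same rank-one encoder update whose size is set by $\Mab$; and the same correlated-regime assumption in the residual and decoder layers (the paper's $\Mab^{(\ell)}\sim D\sqrt{N_\beta}$). Your RMS-entry bookkeeping is equivalent to the paper's expected squared Frobenius norms. The one difference is that the paper keeps $\mathbf R^{(0)}$ and computes $\mathbb E[(\mathbf R^{(0)})^T\mathbf R^{(0)}]\approx(1+2/L^2)^L\,\ID_D$ rather than replacing it by the identity, and this shows your deviation estimate is slightly off: the accumulated deviation is $O(L^{-1/2})$ in amplitude and $O(L^{-1})$ only in squared norm, though this changes none of the $\Theta$ scalings.
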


\begin{proof}
For a 1D graph regression task with predictions $z_\alpha$, the MSE loss and corresponding residual are 
\[
    \mathcal{L}_\alpha = 
    \frac{1}{2}
    \big(
        y_\alpha - z_\alpha
    \big)^2
    \ \  \ \ 
    \text{and}
    \ \  \ \ 
    \Delta_\alpha :=
    -\nabla_{z_\alpha} \mathcal{L}_\alpha =
    y_\alpha - z_\alpha. 
\] 
\paragraph{First Layer}
Under GD, the scale of the updates to the first-layer features is
\[
    \Exp{\norm{\Delta\X{0}_\beta}_F^2} 
    = 
    \frac{1}{n_0\sigma_0^2}
    \Exp{
        \norm{
            \mathbf{X}_\beta\Delta\W{0}{}
        }_F^2
    }
    = 
    \frac{(\Delta_\alpha \etasgd^{(0)})^2}{n_0\sigma_0^2}
    \Exp{
        \norm{
            \mathbf{X}_\beta 
            \nabla_{\W{0}{}} z_\alpha 
        }_F^2
    }.
\] 
The gradient with respect to the first-layer weights is
\begin{align}
    \nabla_{\W{0}{}} z_\alpha 
    &= 
    \nabla_{\W{0}{}}
    \bigg(
        \frac{1}{\sigma_{L+1} DN_\alpha}
        \one_{N_\alpha}^T
        \bigg(
            \big(
                \X{L-1}_\alpha
                + 
                \frac{1}{L}
                \frac{1}{\gamma_{L-1}}
                \mathbf P_\alpha
                \X{L-1}_\alpha
                \frac{1}{\sqrt{D}}
                \Wt{L}
            \big)
            \big(
                \ID_D
                + 
                \frac{1}{L\sqrt{D}}
                \W{L}{1}
                \frac{1}{\sqrt{4D}}
                \W{L}{2}
            \big)
        \bigg)
        \W{L+1}{}
    \bigg)
    \nonumber
    \\
    &\approx
    \nabla_{\W{0}{}}
    \bigg(
        \frac{1}{\sigma_{L+1}DN_\alpha}
        \one_{N_\alpha}^T
        \X{0}_\alpha
        \bigg(
            \prod_{\ell=0}^{L-1}
            \big(
                \ID_D
                + 
                \frac{1}{L}
                \frac{1}{\sqrt{D}}
                \Wt{\ell+1}
            \big)
            \big(
                \ID_D
                + 
                \frac{1}{L\sqrt{D}}
                \W{\ell+1}{1}
                \frac{1}{\sqrt{4D}}
                \W{\ell+1}{2}
            \big)
        \bigg)
        \W{L+1}{}
    \bigg)
    \nonumber
    \\
     &:=
    \nabla_{\W{0}{}}
    \bigg(
        \frac{1}{\sigma_{L+1} DN_\alpha}
        \one_{N_\alpha}^T
        \bigg(
            \frac{1}{\sigma_0\sqrt{n_0}}
            \mathbf{X}_\alpha 
            \W{0}{}
        \bigg)
        \mathbf{R}^{(0)}
        \W{L+1}{}
    \bigg)
    \nonumber
    \\
    &=
    \frac{1}{\sigma_{L+1}\sigma_0 DN_\alpha\sqrt{n_0}}
    \nabla_{\W{0}{}}
    \text{Tr}
    \bigg(
        \one_{N_\alpha}^T
        \mathbf{X}_\alpha 
        \W{0}{}
        \mathbf{R}^{(0)}
        \W{L+1}{}
    \bigg)
    \nonumber
    \\
    &=
    \frac{1}{\sigma_{L+1}\sigma_0 DN_\alpha\sqrt{n_0}}
    \bigg(
        \mathbf{R}^{(0)}
        \W{L+1}{}
        \one_{N_\alpha}^T
        \mathbf{X}_\alpha
    \bigg)^T.
    \label{eq:first-layer-gradient}
\end{align}
where in the second line we approximate that the message passing operator $\mathbf P_\alpha$, when properly normalized by \(\gamma_\ell\) (or otherwise degree-normalized such that $\gamma_\ell \approx 1$), is treated as contributing order-one factors to the forward and backward feature norms. This approximation isolates the dependence of the update on width and depth; graph-pair-dependent effects are represented by the alignment factor \(\Mab\) introduced in Eqn.~\ref{eq:Mab}. 
In the third line defined the residual contribution
\begin{equation}
    \mathbf{R}^{(\ell)}
    :=
    \prod_{\ell'=\ell}^{L-1}
    \big(
        \ID_D + 
        \frac{1}{L}
        \frac{1}{\sqrt{D}}
        \Wt{\ell'+1}
    \big)
    \big(
        \ID_D + 
        \frac{1}{L}
        \frac{1}{\sqrt{D}}
        \W{\ell'+1}{1}
        \frac{1}{\sqrt{4D}}
        \W{\ell'+1}{2}
    \big).
    \label{eq:residual_contribution}
\end{equation}
By design, $\mathbf R^{(\ell)}$ contributes an order-one factor to the scale of the gradient, in the sense that 
\begin{align}
    \Exp{(\mathbf{R}^{(\ell)})^T\mathbf{R}^{(\ell)}}
    &=
    \prod_{\ell'=\ell}^{L-1}
    \Exp{
        \ID_D + 
        \frac{1}{L^2D}
        (\Wt{\ell'+1})^T\Wt{\ell'+1} + 
        \frac{1}{4L^2D^2}
        (\W{\ell'+1}{1}
        \W{\ell'+1}{2})^T
        \W{\ell'+1}{1}
        \W{\ell'+1}{2}
        + \mathcal{O}(L^{-4})
    } 
    \nonumber
    \\
    &=
    \ID_D
    \times
    \big(
        1
        + 
        \frac{2}{L^2} 
    +  \mathcal{O}(L^{-4})
    \big)^{L-\ell},
    \label{eq:RRT}
\end{align}
where we have used the independence of the different weight types in each layer. 
Thus, the first-layer updates scale as
\begin{align*}
    \Exp{\norm{\Delta\X{0}_\beta}_F^2} 
    &\sim
    \frac{(\Delta_\alpha \etasgd^{(0)})^2}{n_0\sigma_0^2}
    \frac{1}{(\sigma_{L+1}\sigma_0 DN_\alpha)^2n_0}
    \Exp{
        \norm{
            \mathbf{X}_\beta 
            (\mathbf{X}_\alpha)^T
            \one_{N_\alpha}
            (\W{L+1}{})^T
            (\mathbf{R}^{(0)})^T
        }_F^2
    }
    \\
    &=
    \frac{(\Delta_\alpha \etasgd^{(0)})^2}{n_0^2\sigma_0^4D}
    \bigg(
        1 + 
        \frac{2}{L^2}
        +
        \mathcal{O}(L^{-4})
    \bigg)^L
    \norm{
        \frac{1}{N_\alpha}
        \one_{N_\alpha}^T
        \mathbf{X}_\alpha
        (\mathbf{X}_\beta)^T
    }_2^2
    \\
    &\sim
    \frac{(\etasgd^{(0)})^2}{n_0^2\sigma_0^4D}
    \Mab^2. 
\end{align*}
To keep this Frobenius norm $\sim N_\beta D$, the learning rate should be parameterized as 
\[
    \etasgd^{(0)} = \eta_0D\sigma_0^2
    \times
    \frac{n_0\sqrt{N_\beta}}{\Mab}. 
\]
Therefore, we may use $\sigma_{0}$ to modify the gradient scale in the first layer, which additionally depends on a correction factor related to $\Mab$, a quantity tracking the correlation of the inputs.

\paragraph{Residual Layers}
In the $(\ell+1)^\mathrm{th}$ layer we have
\begin{align*}
    \X{\ell+1}_\beta
    &=
    \bigg(
        \mathbf X^{(\ell)}_\beta +
        \frac{1}{L\sqrt{D}}\frac{1}{\gamma_\ell}
        \mathbf P_\beta
        \X{\ell}_\beta
        \Wt{\ell+1}
    \bigg)
    \bigg(
        \ID_D +
        \frac{1}{L\sqrt{D}}\W{\ell+1}{1}
        \frac{1}{\sqrt{4D}}\W{\ell+1}{2}
    \bigg)
    \\
    &\approx
    \X{\ell}_\beta
    \bigg(
        \ID_D +
        \frac{1}{L\sqrt{D}}\Wt{\ell+1}
    \bigg)
    \bigg(
        \ID_D +
        \frac{1}{L\sqrt{D}}\W{\ell+1}{1}
        \frac{1}{\sqrt{4D}}\W{\ell+1}{2}
    \bigg)
    \\
    &=
    \X{\ell}_\beta
    +
    \frac{1}{L\sqrt{D}}\X{\ell}_\beta\Wt{\ell+1}
    +
    \frac{1}{L\sqrt{D}}\X{\ell}_\beta\W{\ell+1}{1}
    \frac{1}{\sqrt{4D}}\W{\ell+1}{2}
    +
    \mathcal{O}(L^{-2})
\end{align*}
so that we must separately consider gradients with respect to $\W{\ell+1}{2}$, $\W{\ell+1}{1}$, and $\Wt{\ell+1}$. 
In the second line, we approximated that the message passing operator $\mathbf P_\beta$, when appropriately normalized by $\gamma_\ell$, will not affect the width or depth scaling of the network. 
Note that if we do not scale the forward pass by $\gamma_\ell^{-1}$, the second and third terms inherit different scales; the message passing term will have a size $\gamma_\ell$, shifting its contribution relative to that of the MLP weights.
Updates to $\X{\ell+1}_\beta$ take the form
\[
    \Delta \X{\ell+1}_\beta
    = 
    \Delta \X{\ell}_\beta
     + 
     \frac{1}{L\sqrt{D}}
     \Delta\bigg(
        \X{\ell}_\beta\Wt{\ell+1}
     \bigg)
     +
     \frac{1}{2LD}
     \Delta\bigg(
        \X{\ell}_\beta\W{\ell+1}{1}\W{\ell+1}{2}
    \bigg). 
\]
In the first-layer calculation above, we chose a learning rate such that $||\Delta \X{0}_\beta||_F \sim \sqrt{N_\beta D}$, ensuring every component is order-one. 
As the residual updates are computed recursively from this base case, we expect the first term above to scale like $||\Delta\X{\ell}_\beta||_F \sim \sqrt{N_\beta D}$.
The second and third terms are suppressed by a factor of $1/L$, and we therefore expect them to have Frobenius norms $\sqrt{N_\beta D} / L$. 
This fixes the learning rate in the residual layers; to illustrate, consider the scale of updates produced by the second term:
\begin{align*}
    \frac{1}{L^2D}
    \Exp{
    \norm{
        \Delta\bigg(
            \X{\ell}_\beta
            \Wt{\ell+1}
        \bigg)
    }_F^2}
    &= 
    \frac{1}{L^2D}
    \bigg(
        \Exp{
        \norm{
            \Delta \X{\ell}_\beta
            \Wt{\ell+1}
        }_F^2}
        +
        \Exp{
        \norm{
            \X{\ell}_\beta
            \Delta\Wt{\ell+1}
        }_F^2}
    \bigg)
    \\
    &\sim
    \frac{N_\beta D}{L^2} 
    + 
    \frac{(\etasgd^{(\ell+1)}\Delta_\alpha)^2}{L^2D}
    \Exp{
        \norm{
            \X{\ell}_\beta
            \nabla_{\Wt{\ell+1}} z_\alpha 
        }_F^2
    }
\end{align*}
where the gradient (approximating as usual that each scale shift induced by $\mathbf P_\alpha$ is compensated by its corresponding normalization by $\gamma_\ell$) scales like 
\begin{align}
    \nabla_{\Wt{\ell+1}} z_\alpha 
    &\approx
    \nabla_{\Wt{\ell+1}}
    \bigg(
        \frac{1}{\sigma_{L+1}DN_\alpha}
        \one_{N_\alpha}^T
        \bigg(
            \X{\ell}_\alpha
            \prod_{\ell'=\ell}^{L-1}
            \big(
                \ID_D
                + 
                \frac{1}{L}
                \frac{1}{\sqrt{D}}
                \Wt{\ell'+1}
            \big)
            \big(
                \ID_D
                + 
                \frac{1}{L\sqrt{D}}
                \W{\ell'+1}{1}
                \frac{1}{\sqrt{4D}}
                \W{\ell'+1}{2}
            \big)
        \bigg)
        \W{L+1}{}
    \bigg)
    \nonumber
    \\
    &=
    \nabla_{\Wt{\ell+1}}
    \bigg(
        \frac{1}{\sigma_{L+1}DN_\alpha}
        \one_{N_\alpha}^T
        \bigg(
            \X{\ell}_\alpha
            \big(
                \ID_D
                +
                \frac{1}{L}
                \frac{1}{\sqrt{D}}
                \Wt{\ell+1}
            \big)
            \big(
                \ID_D
                +
                \frac{1}{2LD}
                \W{\ell+1}{1}
                \W{\ell+1}{2}
            \big)
            \mathbf{R}^{(\ell+1)}
        \bigg)
        \W{L+1}{}
    \bigg)
    \nonumber
    \\
    &=
    \frac{1}{\sigma_{L+1}D^{3/2}N_\alpha L}
    \nabla_{\Wt{\ell+1}}
    \tr{
    \bigg(
        \one_{N_\alpha}^T
        \X{\ell}_\alpha
        \Wt{\ell+1}       
        \mathbf{R}^{(\ell+1)} 
        \W{L+1}{}
    \bigg)
    }
    + \mathcal{O}(L^{-2})
    \nonumber
    \\
    &=
    \frac{1}{\sigma_{L+1}D^{3/2}N_\alpha L}
    \bigg(
        \mathbf{R}^{(\ell+1)}
        \W{L+1}{}
        \one_{N_\alpha}^T
        \X{\ell}_\alpha
    \bigg)^T
    + \mathcal{O}(L^{-2}).
    \label{eq:grad_Wl}
\end{align}
With this, ignoring the gradient contribution suppressed by $L^{-2}$, we have 
\begin{align}
    \frac{(\etasgd^{(\ell+1)}\Delta_\alpha)^2}{L^2D}
    \Exp{
    \norm{
        \X{\ell}_\beta
        \nabla_{\Wt{\ell+1}} z_\alpha 
    }_F^2}
    &\approx
    \frac{(\etasgd^{(\ell+1)}\Delta_\alpha)^2}{L^2D}
    \frac{1}{\sigma_{L+1}^2D^3 N_\alpha^2 L^2}
    \Exp{
    \norm{
        \X{\ell}_\beta
        \bigg(
            \mathbf{R}^{(\ell+1)}
            \W{L+1}{}
            \one_{N_\alpha}^T
            \X{\ell}_\alpha
        \bigg)^T
    }_F^2
    }
    \nonumber
    \\
    &=
    \frac{
        (\etasgd^{(\ell+1)}\Delta_\alpha)^2
    }{L^4D^3}
    \bigg(
        1
        + 
        \frac{2}{L^2} 
    \bigg)^{L-\ell-1}
    \Exp{
    \norm{
        \frac{1}{N_\alpha}
        \X{\ell}_\beta
        \big(
            \X{\ell}_\alpha
        \big)^T
        \one_{N_\alpha}
    }_F^2
    }
    \nonumber
    \\
    &:\sim
    \frac{
        (\etasgd^{(\ell+1)})^2
    }{L^4D^3}
    (\Mab^{(\ell)})^2
    \nonumber
    \\
    &\sim
    \frac{
        (\etasgd^{(\ell+1)})^2 N_\beta
    }{L^4D}
    \label{eq:resid_grad_term}
\end{align}
where we have defined the correlation quantity $\Mab^{(\ell)}$ analogously to $\Mab$ (see Eqn.~\ref{eq:Mab}) as
\begin{equation}
    (\Mab^{(\ell)})^2 :=
    \Exp{
        \norm{
            \frac{1}{N_\alpha}
            \one_{N_\alpha}^T
            \X{\ell}_\alpha
            \big(
                \X{\ell}_\beta
            \big)^T
        }_2^2
        }.
    \label{eq:Mab_ell}
\end{equation}
and assumed order-one feature alignment so that $\Mab^{(\ell)}\sim D\sqrt{N_\beta}$, motivated by the recursive feature scale assumptions $||\X{\ell}_\alpha|| \sim \sqrt{N_\alpha D}$ and $||\X{\ell}_\beta|| \sim \sqrt{N_\beta D}$. 
The learning rates $\etasgd^{(\ell+1)}$ should conspire to make the overall scale of the squared update $\frac{1}{L^2}N_\beta D$, i.e.
\[
    \frac{
        (\etasgd^{(\ell+1)})^2 N_\beta
    }{L^4D}
    \sim 
    \frac{N_\beta D}{L^2},
\]
so the learning rates must be
\begin{equation}
    \etasgd^{(\ell+1)}
    =
    \eta_0 DL
    \label{eq:sgd_lr_resid_layers}
\end{equation}
in the $\Wt{\ell+1}$ term. 
The same reasoning applies to the feature update term containing the MLP weights: the additional hidden width \(4D\) and the normalization \(1/\sqrt{4D}\) cancel, so the update scale again fixes \(\etasgd^{(\ell+1)}\sim DL\).
\\

\textbf{Last Layer}
Updates to the last layer take the form
\begin{align*}
    \Exp{\big(\Delta z_\beta\big)^2}
    &=
    \frac{1}{(\sigma_{L+1}DN_\beta)^2}\bigg(
    \Exp{\big(\one_{N_\beta}^T\Delta\X{L}_\beta\W{L+1}{}\big)^2}
    +    
    \Exp{\big(\one_{N_\beta}^T\X{L}_\beta\Delta\W{L+1}{}\big)^2}
    \bigg)
    \\
    &=
    \frac{1}{D^2}\ 
    \Exp{\norm{\frac{1}{N_\beta}\one_{N_\beta}^T\Delta\X{L}_\beta}_2^2}
    + 
    \frac{(\etasgd^{(L+1)}\Delta_\alpha)^2}{\sigma_{L+1}^2 D^2}
    \Exp{\bigg(\frac{1}{N_\beta}\one_{N_\beta}^T \X{L}_\beta\nabla_{\W{L+1}{}}z_\alpha \bigg)^2}
    \\
    &\sim
    \frac{1}{D}
    + 
    \frac{(\etasgd^{(L+1)})^2}{\sigma_{L+1}^2 D^2}
    \Exp{\bigg(\frac{1}{N_\beta}\one_{N_\beta}^T 
    \X{L}_\beta\nabla_{\W{L+1}{}}z_\alpha\bigg)^2}
\end{align*}
where the gradient is 
\begin{align}
    \nabla_{\W{L+1}{}}z_\alpha 
    &=
    \nabla_{\W{L+1}{}}
    \bigg(
        \frac{1}{\sigma_{L+1}DN_\alpha}\one_{N_\alpha}^T\X{L}_\alpha\W{L+1}{}
    \bigg)
    \nonumber
    \\
    &= 
    \frac{1}{\sigma_{L+1}D}\bigg(\frac{1}{N_\alpha}\one_{N_\alpha}^T\X{L}_\alpha\bigg)^T
    \label{eq:grad_WLp1}
\end{align}
so that
\begin{align}
    \frac{(\etasgd^{(L+1)})^2}{\sigma_{L+1}^2D^2}
    \Exp{\bigg(\frac{1}{N_\beta}\one_{N_\beta}^T \X{L}_\beta\nabla_{\W{L+1}{}}z_\alpha\bigg)^2} 
    &=
    \frac{(\etasgd^{(L+1)})^2}{\sigma_{L+1}^4D^4}
    \Exp{
        \norm{
            \frac{1}{N_\beta}\one_{N_\beta}^T \X{L}_\beta
            \bigg(\frac{1}{N_\alpha}\one_{N_\alpha}^T \X{L}_\alpha\bigg)^T
        }_F^2
    }
    \nonumber
    \\
    &\sim \frac{(\etasgd^{(L+1)})^2}{\sigma_{L+1}^4 D^2},
    \label{eq:last_layer_grad_contribution}
\end{align}
where in the last line we assume that the quantity in the expectation brackets scales like $D^2$; this is motivated by the fact that the residual branches are scaled to keep  $\Exp{\norm{\X{L}_\alpha}_F^2}\sim N_\alpha D$ and $\Exp{\norm{\X{L}_\beta}_F^2}\sim N_\beta D$. 
Though the scale of $|z|$ is suppressed at initialization, we choose a learning rate $\etasgd^{(L+1)}$ so that that the update $|\Delta z|$ is $\Theta(1)$: 
\begin{equation}
    \etasgd^{(L+1)} 
    \sim
    \eta_0 D \sigma_{L+1}^2.
    \label{eq:lr_last_layer}
\end{equation}
Thus, we may use $\sigma_{L+1}$ to modify the gradient scale in the last layer. 
\end{proof}
\vspace{1mm}


\begin{proposition}
\label{th:adam_lrs}
    \textbf{(Adam Learning Rates)}
    After one step of training on $\mathcal{G}_\alpha$ via Adam, the network features in the encoder, residual, and decoder layers receive $\Theta_{D,L}(1)$ updates on $\mathcal{G}_\beta$ if
    \begin{itemize}
        \item The learning rate in the first layer is 
        \begin{equation}
            \etaadam^{(0)} 
            =
            \eta_0\sigma_0
            \label{eq:lr_adam_first}
        \end{equation}
        \item The learning rates in the residual stream ($1\leq\ell\leq L$) are
        \begin{align}
            \etaadam^{(\ell)}
            = 
            \frac{\eta_0}{\sqrt{D}}.
            \label{eq:adam_lr_resid_layers}
        \end{align} 
        \item The learning rate in the last layer is
        \begin{equation}
            \etaadam^{(L+1)}
            =
            \eta_0\sigma_{L+1}
            \label{eq:adam_lr_last_layer}
        \end{equation}
    \end{itemize}
    Though different learning rates are required in each of the encoder, residual, and decoder layers, one may use a global effective learning rate $\etaadam=\eta_0/\sqrt{D}$ with the choices $\sigma_0=\sigma_{L+1}=1/\sqrt{D}$.  
\end{proposition}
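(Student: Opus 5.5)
The plan is to reuse the layer-by-layer structure of the proof of Prop.~\ref{th:update-stability-gd-lr-graph-regression}. The only change is how a weight update $\Delta\theta$ relates to the gradient. After a single Adam step from zero moment estimates with $\epsilon\to 0$ (we use $\epsilon=10^{-14}$), the bias-corrected moments are $\hat m_\theta=\nabla_\theta\mathcal L$ and $\hat v_\theta=(\nabla_\theta\mathcal L)^2$ entrywise. Hence the first step is $\Delta\theta=-\etaadam\,\mathrm{sign}(\nabla_\theta\mathcal L)$. Every weight entry therefore moves by exactly $\etaadam$ in magnitude, independent of the gradient scale, so $\sigma_\ell$, $D$, $L$ and $\Mab$ factors in the gradient drop out. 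I would state this first as a lemma. The sign matrix inherits the low-rank structure of the gradients computed in Eqns.~\ref{eq:first-layer-gradient}, \ref{eq:grad_Wl} and \ref{eq:grad_WLp1}. Each of these is an outer product $\mathbf a\mathbf b^T$, so $\mathrm{sign}(\mathbf a\mathbf b^T)=\mathrm{sign}(\mathbf a)\mathrm{sign}(\mathbf b)^T$ is rank one with $\pm1$ entries. This correlation is what produces the $\mu$P-type scalings. It is the main obstacle, since one must track alignment between $\mathbf X_\beta$ (or $\X{\ell}_\beta$) and the sign vector instead of the raw gradient.

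\textbf{Encoder.} By Eqn.~\ref{eq:first-layer-gradient}, $\Delta\W{0}{}=\pm\etaadam^{(0)}\,\mathbf s\,\mathbf t^T$. Here $\mathbf s=\mathrm{sign}((\mathbf X_\alpha)^T\one_{N_\alpha})\in\{\pm1\}^{n_0}$ and $\mathbf t=\mathrm{sign}(\mathbf R^{(0)}\W{L+1}{})\in\{\pm1\}^D$. Then $\Delta\X{0}_\beta=\frac{1}{\sigma_0\sqrt{n_0}}\mathbf X_\beta\Delta\W{0}{}$ has entries $\frac{\etaadam^{(0)}}{\sigma_0\sqrt{n_0}}(\mathbf X_\beta\mathbf s)_u t_i$. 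Since $n_0$ is fixed and $\|\mathbf x_u\|_2^2=n_0$, $(\mathbf X_\beta\mathbf s)_u/\sqrt{n_0}=O(1)$ independent of $D,L$. Order-one entries therefore require $\etaadam^{(0)}\sim\eta_0\sigma_0$, giving Eqn.~\ref{eq:lr_adam_first}.

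\textbf{Residual layers.} Following the residual computation in Prop.~\ref{th:update-stability-gd-lr-graph-regression}, the relevant new term is $\frac{1}{L\sqrt D}\X{\ell}_\beta\Delta\Wt{\ell+1}$. Here $\Delta\Wt{\ell+1}=\pm\etaadam^{(\ell+1)}\mathrm{sign}((\X{\ell}_\alpha)^T\one)\,\mathrm{sign}(\mathbf R^{(\ell+1)}\W{L+1}{})^T$. The product $\X{\ell}_\beta\,\mathrm{sign}((\X{\ell}_\alpha)^T\one)$ is a sum of $D$ terms. Under the same order-one feature alignment assumption used for $\Mab^{(\ell)}$, these terms add coherently, giving $\Theta(D)$ per node, not $\sqrt D$. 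The update entry thus scales as $\frac{1}{L\sqrt D}\etaadam^{(\ell+1)}D$. Demanding $\Theta(1/L)$ gives $\etaadam^{(\ell+1)}\sim\eta_0/\sqrt D$, which is Eqn.~\ref{eq:adam_lr_resid_layers}. The MLP term is handled the same way, treating $\W{\ell+1}{1}$ and $\W{\ell+1}{2}$ separately: the hidden width $4D$ cancels against $1/\sqrt{4D}$ as before, and the second-order cross terms are $\mathcal O(L^{-2})$. The propagated term $\Delta\X{\ell}_\beta$ stays $\Theta(1)$ by induction from the encoder, using $\mathbf R^{(\ell)}$ being order one (Eqn.~\ref{eq:RRT}).

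\textbf{Decoder.} By Eqn.~\ref{eq:grad_WLp1}, $\Delta\W{L+1}{}=\pm\etaadam^{(L+1)}\mathrm{sign}(\frac{1}{N_\alpha}\one^T\X{L}_\alpha)^T$. The change in $z_\beta$ is $\frac{1}{\sigma_{L+1}DN_\beta}\one^T\X{L}_\beta\Delta\W{L+1}{}$. Coherent alignment again makes the inner product $\Theta(D)$, so $\Delta z_\beta\sim\etaadam^{(L+1)}/\sigma_{L+1}$. Setting this to $\Theta(1)$ gives Eqn.~\ref{eq:adam_lr_last_layer}. The term from $\Delta\X{L}_\beta$ is $O(1/\sqrt D)$, as in the GD proof.

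\textbf{Global learning rate.} Finally, equating $\eta_0\sigma_0$ and $\eta_0\sigma_{L+1}$ with the residual rate $\eta_0/\sqrt D$ yields the stated choice $\sigma_0=\sigma_{L+1}=1/\sqrt D$.
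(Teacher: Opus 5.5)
Your argument is correct and recovers the paper's rates. In the residual and decoder layers, however, you get there by a different mechanism.

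The paper states the same sign-GD approximation but applies it directly only in the encoder. There your computation coincides with its estimate of $\big(\sum_j \mathbf X_{\beta,uj}\,\mathrm{sign}(\bar{\mathbf X}_{\alpha})_j\big)^2$. For the residual and decoder layers it takes a different step:
\begin{itemize}
\item It reuses the GD update sizes from Prop.~\ref{th:update-stability-gd-lr-graph-regression}.
\item It rescales them by a surrogate Adam normalizer, the per-entry RMS of the full gradient: $D^{-3/2}L^{-1}$ for $\tilde{\mathbf W}^{(\ell+1)}$ and $\sigma_{L+1}^{-1}D^{-1}$ for $\mathbf W^{(L+1)}$.
\item In effect it replaces $\eta_{\mathrm{GD}}$ by $\eta_{\mathrm{Adam}}/\mathrm{RMS}(\mathbf g)$.
\end{itemize}

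You proceed differently in two ways:
\begin{itemize}
\item You justify sign-GD exactly for the first bias-corrected step with $\epsilon\to0$.
\item You evaluate the update itself via $\mathrm{sign}(\mathbf a\mathbf b^T)=\mathrm{sign}(\mathbf a)\,\mathrm{sign}(\mathbf b)^T$. Everything then reduces to coherent sums such as $\big(\mathbf X^{(\ell)}_\beta\,\mathrm{sign}((\mathbf X^{(\ell)}_\alpha)^T\mathbf 1)\big)_u=\Theta(D)$.
\end{itemize}

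At this heuristic level the two routes agree. For a rank-one gradient whose factors have order-one correlated coordinates, $\mathrm{sign}(\mathbf a\mathbf b^T)$ and $\mathbf a\mathbf b^T/\mathrm{RMS}(\mathbf a\mathbf b^T)$ act on aligned vectors the same way, up to constants like $\sqrt{2/\pi}$. Your alignment hypothesis is the sign analogue of the paper's $M^{(\ell)}_{\alpha\beta}\sim D\sqrt{N_\beta}$.

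The paper's route is shorter because it recycles the GD bookkeeping. Yours is more faithful to the sign-GD approximation the paper itself announces, and it makes the $\mu$P coherence mechanism explicit. It also shows transparently why the Adam rates carry no $L$-dependence: the sign erases the $1/L$ in the gradient, so the forward-pass $1/L$ alone produces $\Theta(1/L)$ updates.

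One side remark, inherited from the paper's GD proof, is not right. The contribution of $\Delta\mathbf X^{(L)}_\beta$ to $\Delta z_\beta$ is not $O(1/\sqrt D)$.
\begin{itemize}
\item In your own notation, $\Delta\mathbf X^{(0)}_\beta\propto(\mathbf X_\beta\mathbf s)\,\mathbf t^T$ with $\mathbf t=\mathrm{sign}(\mathbf R^{(0)}\mathbf W^{(L+1)})$.
\item After propagation, the readout contracts $\mathbf t$ against $\mathbf R^{(0)}\mathbf W^{(L+1)}$. This gives the coherent sum $\sum_i|(\mathbf R^{(0)}\mathbf W^{(L+1)})_i|\sim D\sigma_{L+1}$, so this term is $\Theta(1)$.
\item Similarly, each residual layer contributes $\Theta(1/L)$ through this path.
\end{itemize}
This does not change any of the learning-rate conclusions, since $\Delta z_\beta$ remains $\Theta(1)$. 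It is in fact the expected maximal-update behavior, and your sign-matrix framework is exactly what makes it visible.
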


\begin{proof}
Adam applies an RMSProp-style normalization to its gradients; we will exploit this fact to make a sign-GD approximation that
\begin{equation}
    \Delta \mathbf W_{ij}^{(\ell)}
    \approx
    -\etaadam^{(\ell)}
    \frac{
        \mathbf g_{ij}^{(\ell)}
    }{
        |\mathbf g_{ij}^{(\ell)}|
    }
    =
    -\etaadam^{(\ell)}
    \text{sign}(
        \mathbf g_{ij}^{(\ell)}
    )
    \label{eq:sign-gd-approx}
\end{equation}
where $\mathbf g_{ij}^{(\ell)} := \nabla_{\mathbf W^{(\ell)}_{ij}} \mathcal L$ is the loss gradient.
This approximation captures the Adam scaling when $\epsilon$ and weight decay are negligible (see Eqn.~\ref{eq:optimizers}; throughout we use it only to track width and depth dependence.

\paragraph{Encoder Layer}

From Eqn.~\ref{eq:first-layer-gradient}, the sign of the encoder weight gradients is dictated by  
\[
   \text{sign}(\mathbf{g}_{ij}^{(0)})
   = 
   \text{sign}(\Delta_\alpha)
   \text{sign}(\bar{\mathbf{X}}_{\alpha,i})
   \text{sign}(
        (\mathbf R^{(0)} \mathbf W^{(L+1)})_j
    )
\]
where $\bar{\mathbf X}_\alpha:=\frac{1}{N_\alpha}\boldsymbol 1^T\mathbf X_\alpha\in\R^{n_0}$ is the mean of the input node features. 
Then, the scale of each encoder feature update is
\[
    \Exp{
        \big(
            \Delta\mathbf X^{(0)}_{\beta,ui}
        \big)^2
    }
    =
    \frac{1}{\sigma_0^2 n_0}
    \Exp{
        \bigg(
            \sum_{j=1}^{n_0}
            \mathbf X_{\beta,uj}
            \Delta\mathbf W^{(0)}_{\alpha,ji}
        \bigg)^2    
    }
    \sim 
    \frac{(\etaadam^{(0)})^2}{\sigma_0^2 n_0}
    \bigg(
        \sum_{j=1}^{n_0}
        \mathbf X_{\beta,uj}
        \text{sign}(\bar{\mathbf X})_{\alpha,j}
    \bigg)^2
\]
where squaring removes the dependence of the weight update on the sign of the residual and the last-layer weights. 
Assuming that the quantity in parenthesis scales like $\sqrt{n_0}$, and demanding this update be order-one in width and depth, we recover
\[
    \etaadam^{(0)} = \eta_0 \sigma_0.
\]

\paragraph{Residual Layers}
In the residual layers, the gradient directions are non-trivially correlated. 
As in Prop.~\ref{th:update-stability-gd-lr-graph-regression}, we need to consider several terms in the residual layers,
\[
    \Delta \X{\ell+1}_\beta
    = 
    \Delta \X{\ell}_\beta
     + 
     \frac{1}{L\sqrt{D}}
     \Delta\bigg(
        \X{\ell}_\beta\Wt{\ell+1}
     \bigg)
     +
     \frac{1}{2LD}
     \Delta\bigg(
        \X{\ell}_\beta\W{\ell+1}{1}\W{\ell+1}{2}
    \bigg)
\]
where the first scales like $\sqrt{N_\beta D}$ and the second and third like $\sqrt{N_\beta D}/L$.
In Prop.~\ref{th:update-stability-gd-lr-graph-regression}, we simplified the second term assuming GD (see Eqn.~\ref{eq:resid_grad_term}):
\begin{align*}
    \frac{1}{L^2D}
    \Exp{
    \norm{
        \Delta\bigg(
            \X{\ell}_\beta
            \Wt{\ell+1}
        \bigg)
    }_F^2}
    &= 
    \frac{1}{L^2D}
    \bigg(
        \Exp{
        \norm{
            \Delta \X{\ell}_\beta
            \Wt{\ell+1}
        }_F^2}
        +
        \Exp{
        \norm{
            \X{\ell}_\beta
            \Delta\Wt{\ell+1}
        }_F^2}
    \bigg)
    \\
    &\sim
    \frac{N_\beta D}{L^2} 
    + 
    \frac{(\etasgd^{(\ell+1)}\Delta_\alpha)^2}{L^2D}
    \Exp{
        \norm{
            \X{\ell}_\beta
            \nabla_{\Wt{\ell+1}} z_\alpha 
        }_F^2
    }
    \\
    &\sim
    \frac{N_\beta D}{L^2} 
    + 
    \frac{
        (\etasgd^{(\ell+1)})^2 N_\beta
    }{L^4D}
\end{align*}
To convert to an Adam learning rate, we must rescale the second term by a surrogate Adam gradient norm, which we take to be the RMS norm of the full gradient, $||\nabla_{\Wt{\ell}} z_\alpha||_F / D$. 
From Eqn.~\ref{eq:grad_Wl}, we can derive the norm of the gradient,  
\[
    \Exp{\norm{\nabla_{\Wt{\ell}}z_\alpha}_F^2}
    \approx
    \frac{1}{D^2L^2}
    \Exp{
    \norm{
        \frac{1}{N_\alpha}
        \one_{N_\alpha}^T
        \X{\ell}_\alpha
    }_F^2
    }
    \sim
    \frac{1}{DL^2}
\]
which, per-parameter, is $D^{-3}L^{-2}$. 
We therefore ask that
\[
    \frac{
        (\etaadam^{(\ell+1)})^2 N_\beta
    }{L^4D}
    \times
    D^3 L^2
    \sim
    \frac{N_\beta D}{L^2}
\]
so that the Adam learning rate in the residual layers should be 
\[
    \etaadam^{(\ell+1)} = \frac{\eta_0}{\sqrt{D}}
\]

\paragraph{Last Layer}
In proving Prop.~\ref{th:update-stability-gd-lr-graph-regression}, we found (see Eqn.~\ref{eq:last_layer_grad_contribution}) that for GD, the scale of updates to the output features are
\begin{align*}
    \Exp{\big(\Delta z_\beta\big)^2}
    &=
    \frac{1}{(\sigma_{L+1}DN_\beta)^2}\bigg(
    \Exp{\big(\one_{N_\beta}^T\Delta\X{L}_\beta\W{L+1}{}\big)^2}
    +    
    \Exp{\big(\one_{N_\beta}^T\X{L}_\beta\Delta\W{L+1}{}\big)^2}
    \bigg)
    \\
    &=
    \frac{1}{D^2}\ 
    \Exp{\norm{\frac{1}{N_\beta}\one_{N_\beta}^T\Delta\X{L}_\beta}_2^2}
    + 
    \frac{(\etasgd^{(L+1)}\Delta_\alpha)^2}{\sigma_{L+1}^2 D^2}
    \Exp{\bigg(\frac{1}{N_\beta}\one_{N_\beta}^T \X{L}_\beta\nabla_{\W{L+1}{}}z_\alpha \bigg)^2}
    \\
    &\sim
    \frac{1}{D}
    + 
    \frac{(\etasgd^{(L+1)})^2}{\sigma_{L+1}^4 D^2}.
\end{align*}
As before, to adapt to Adam, we need to rescale the second term by the Adam gradient norm surrogate $||\nabla_{\W{L+1}{}} z_\alpha||_F / \sqrt{D}$, the RMS norm of the full gradient. 
From Eqn.~\ref{eq:grad_WLp1}, 
\[
    \Exp{\norm{\nabla_{\W{L+1}{}} z_\alpha}_F^2}
    = 
    \frac{1}{\sigma_{L+1}^2D^2}
    \Exp{\norm{
        \frac{1}{N_\alpha}
        \one_{N_\alpha}^T\X{L}_\alpha
    }_F^2}
    \sim 
    \frac{1}{\sigma_{L+1}^2 D}
\]
where we recall that this scaling depends on the pooled-feature alignment assumption used in Eqn.~\ref{eq:grad_WLp1}. 
Per-parameter, then, the correction factor is $\sigma_{L+1}^{-2} D^{-2}$.
We then ask that
\[
    \frac{(\etaadam^{(L+1)})^2}{\sigma_{L+1}^4 D^2}
    \times 
    \sigma_{L+1}^{2} D^2
    \sim
    1
\]
so that the Adam learning rate in the last layer is 
\[
    \etaadam^{(L+1)} 
    = 
    \eta_0\sigma_{L+1}.
\]

\end{proof}

\subsection{Batching Effects}
\label{sec:batching-effects}

In the previous propositions, we considered only single graph updates. 
Here we comment on the effects of batching graphs, either in graph-level or vertex-level tasks. 
In the following remarks, we will fix two batches of $B$ graphs $\mathcal{B}_\alpha=\{\mathcal{G}_{\alpha,i}\}_{i=1}^B$ and $\mathcal{B}_\beta=\{\mathcal{G}_{\beta,i}\}_{i=1}^B$ with node features $\mathbf{X}_{\alpha,i}\in\R^{N_{\alpha,i}\times n_0}$ and $\mathbf{X}_{\beta,i}\in\R^{N_{\beta,i}\times n_0}$.
The numbers of nodes in each batch are $\Nba:=\sum_{i=1}^B N_{\alpha,i}$ and $\Nbb:=\sum_{i=1}^B N_{\beta,i}$ respectively.
We will stack the node features in each batch into matrices $\mathbf{X}_\alpha\in\R^{\Nba\times n_0}$ and $\mathbf{X}_\beta\in\R^{\Nbb\times n_0}$. 

\begin{remark}
\label{th:batching-effects_gd-vertex-classification}
    \textbf{(GD Node Regression)}
    Consider a minimal transfer GNN applied to a batch of graphs $\mathcal{B}_\alpha$ in a node-level regression task with one-dimensional targets 
    $\mathbf{Y}_\alpha\in\R^{\Nba}$. 
    After one step of gradient descent, the model's features and their updates remain $\Theta_{D,L}(1)$ if the learning rates are set as in Prop.~\ref{th:update-stability-gd-lr-graph-regression} with the following modification to the first-layer learning rate:
    \begin{equation}
        \etasgd^{(0)} 
        =
        \eta_0D\sigma_0^2
        \times
        \frac{n_0\sqrt{\Nbb}}{\Mab^*}
    \end{equation}
    with 
    \begin{equation}
        (\Mab^*)^2
        :=
        \Exp{
        \norm{
            \frac{1}{\Nba}\sum_{u=1}^{\Nba}
            \Delta_{\alpha,u}
            \mathbf{X}_\beta
            \mathbf{X}_{\alpha,u}^T
        }_2^2}
    \end{equation}
    where $\Delta_{\alpha,u}$ is the loss residual of node $u$ belonging to $\mathcal G_\alpha$. 
\end{remark}
\begin{proof}
In a node regression setting, the loss incurred is typically an average over all nodes in the batch, 
\[
    \mathcal{L}_\alpha = 
    \frac{1}{2\Nba}\sum_{u=1}^{\Nba} 
    (\mathbf Z_{\alpha,u} - \mathbf{Y}_{\alpha,u})^2.
\]
The gradient of this loss with respect to $\W{0}{}$ is
\[
    \nabla_{\W{0}{}}\mathcal L_\alpha
    = 
    \frac{1}{\Nba}\sum_{u=1}^{\Nba}
    \Delta_{\alpha,u}
    \nabla_{\W{0}{}}\mathbf Z_{\alpha,u}
\]
where $\Delta_{\alpha,u}:=\mathbf Z_{\alpha,u}-\mathbf{Y}_{\alpha,u}$ is the loss residual. 
Updates to the first-layer activations have size 
\[
    \Exp{\norm{\Delta\X{0}_{\beta}}_F^2}
    = 
    \Exp{\norm{
        \frac{1}{\sqrt{n_0}\sigma_0}
        \mathbf{X}_\beta\Delta\W{0}{\alpha}
    }_F^2}
    =
    \frac{(\etasgd^{(0)})^2}{n_0\sigma_0^2}
    \Exp{\norm{
        \mathbf{X}_\beta
        \frac{1}{\Nba}\sum_{u=1}^{\Nba}
        \Delta_{\alpha,u}
        \nabla_{\W{0}{}} \mathbf Z_{\alpha,u}
    }_F^2}
\]
where the gradient takes an analogous form as the one derived in Eqn.~\ref{eq:first-layer-gradient},
\[
    \nabla_{\W{0}{}}\mathbf Z_{\alpha,u}
    =
    \frac{1}{\sigma_{L+1}D\sigma_0\sqrt{n_0}}
    \bigg(
        \mathbf{R}^{(0)}\W{L+1}{}\mathbf{X}_{\alpha,u}
    \bigg)^T
\]
where $\mathbf X_{\alpha,u}\in\R^{1\times n_0}$. 
With this, the update magnitudes are 
\begin{align*}
    \Exp{\norm{\Delta\X{0}_{\beta}}_F^2}
    &=
    \frac{(\etasgd^{(0)})^2}{n_0\sigma_0^2}
    \frac{1}{\sigma_{L+1}^2D^2\sigma_0^2n_0}
    \Exp{\norm{
        \mathbf{X}_\beta
        \frac{1}{\Nba}\sum_{u=1}^{\Nba}
        \Delta_{\alpha,u}
        \bigg(
            \mathbf{R}^{(0)}\W{L+1}{}\mathbf{X}_{\alpha,u}
        \bigg)^T
    }_F^2}
    \\
    &\approx
    \frac{(\etasgd^{(0)})^2}{n_0^2\sigma_0^4D}
    \Exp{\norm{
        \frac{1}{\Nba}\sum_{u=1}^{\Nba}
        \Delta_{\alpha,u}
        \mathbf{X}_\beta
        \mathbf{X}_{\alpha,u}^T
    }_2^2}
    \\
    &:= 
    \frac{(\etasgd^{(0)})^2}{n_0^2\sigma_0^4D}
    (\Mab^*)^2
\end{align*}
where we have defined
\[
    (\Mab^*)^2 := 
    \Exp{\norm{
        \frac{1}{\Nba}\sum_{u=1}^{\Nba}
        \Delta_{\alpha,u}
        \mathbf{X}_\beta
        \mathbf{X}_{\alpha,u}^T
    }_2^2}.
\]
Note that the presence of the loss residuals in the summation above is a key difference from $\Mab$ computed in the un-batched setting (see Eqn.~\ref{eq:Mab}). 
Demanding that the updates be order-one, we have
\[
    \frac{(\etasgd^{(0)})^2}{n_0^2\sigma_0^4D}
    (\Mab^*)^2
    \sim
    \Nbb D
    \ \ \implies \ \
    \etasgd^{(0)} = 
    \eta_0D\sigma_0^2 
    \times
    \frac{n_0\sqrt{\Nbb}}{\Mab^*}
\]

\end{proof}

\begin{remark}
\label{th:batching-effects_sgd-graph-regression}
    \textbf{(SGD Graph Regression)}
    Consider a minimal transfer GNN applied to a batch $\mathcal{B}_\alpha$ in a graph regression setting with scalar targets.
    After one step of gradient descent, the model's features and their updates remain $\Theta_{D,L}(1)$ if the learning rates are set as in Prop.~\ref{th:update-stability-gd-lr-graph-regression} with the following modification to the first-layer learning rate:
    \begin{equation}
        \etasgd^{(0)} 
        =
        \eta_0D\sigma_0^2
        \times
        \frac{n_0\sqrt{\Nbb}}{\Mab^B}
    \end{equation}
    with
    \begin{equation}
        (\Mab^B)^2 = 
        \Exp{\norm{
            \frac{1}{B}
            \sum_{i=1}^{B}
            \Delta_{\alpha,i}
            \frac{1}{N_{\alpha,i}}
            \one_{N_{\alpha,i}}^T
            \mathbf{X}_{\alpha,i}
            (\mathbf{X}_\beta)^T
        }_2^2}
    \label{eq:MabB}
    \end{equation} 
\end{remark}
\begin{proof}
For a 1D graph regression task, the batch-averaged MSE loss is 
\[
    \mathcal{L}_\alpha = 
    \frac{1}{2B}
    \sum_{i=1}^{B}
    \big(
        y_{\alpha,i} -
        z_{\alpha,i}
    \big)^2.
\] 
Taking the gradient with respect to $\W{0}{}$ is 
\[
    \nabla_{\W{0}{}} \mathcal L_\alpha
    = 
    \frac{1}{B}\sum_{i=1}^{B}
    \Delta_{\alpha,i}
    \nabla_{\W{0}{}} z_{\alpha,i},
\]
and using the gradient from Eqn.~\ref{eq:first-layer-gradient},
\[
    \nabla_{\W{0}{}} z_{\alpha,i} 
    = 
    \frac{1}{\sigma_{L+1}\sigma_0 DN_{\alpha,i}\sqrt{n_0}}
    \bigg(
        \mathbf{R}^{(0)}
        \W{L+1}{}
        \one_{N_{\alpha,i}}^T
        \mathbf{X}_{\alpha,i}
    \bigg)^T,
\]
the scale of updates under SGD is 
\begin{align*}
    \Exp{\norm{\Delta\X{0}_\beta}_F^2} 
    &= 
    \frac{1}{n_0\sigma_0^2}
    \Exp{
        \norm{
            \mathbf{X}_\beta\Delta\W{0}{}
        }_F^2
    }
    = 
    \frac{(\etasgd^{(0)})^2}{n_0\sigma_0^2}
    \Exp{
        \norm{
            \mathbf{X}_\beta 
             \frac{1}{B}\sum_{i=1}^{B}
            \Delta_{\alpha,i}
            \nabla_{\W{0}{}} z_{\alpha,i}
        }_F^2
    }
    \\
    &\sim
    \frac{(\etasgd^{(0)})^2}{\sigma_0^4D n_0^2}
    \Exp{
    \norm{
         \frac{1}{B}\sum_{i=1}^{B}
         \Delta_{\alpha,i}
         \frac{1}{N_{\alpha,i}}
        \one_{N_{\alpha,i}}^T
        \mathbf{X}_{\alpha,i}
        \mathbf{X}_\beta^T
    }_F^2}
    \\
    &:=
    \frac{(\etasgd^{(0)})^2}{\sigma_0^4D n_0^2}
    (\Mab^B)^2
\end{align*}

For the scale of each update to be $\Theta(1)$, the learning rate in the first layer should conspire to make the overall scale of this squared Frobenius norm $\sim \Nbb D$, meaning
\[
    \etasgd^{(0)} = \eta_0D\sigma_0^2 \times
    \frac{n_0\sqrt{\Nbb}}{\Mab^B}.
\]
\end{proof}

\section{Discussion }

We have shown that the proposed GNN parameterization enables robust hyperparameter transfer and scalability across a range of graph learning tasks and optimizers. 
These results are significant because they support the broader claim that it is both possible and fruitful to train large (in particular, deep) GNNs.
Compared to language and vision models, relatively few scaling laws have been computed on graph data~\cite{liu2024towards}. 
Our parameterization enables inexpensive proxy tuning of large GNNs, making it possible to compute new scaling laws on graph data at substantially lower cost.

Graph learning introduces several numerical challenges. 
Graph inputs carry an explicit node dimension, and a downstream task may preserve or pool over it. 
Furthermore, input features may exhibit nontrivial structure, including sparsity, binarity, or strong correlations. 
These effects can change the scale of the features and their movements in the first layer, in some cases necessitating the use of a first layer learning rate correction factor.
Though in Sec.~\ref{sec:scaling_analysis} we find different forms for this correction factor depending on the training scenario, it may be simplest to treat the correction factor as a hyperparameter, scanning around the value given in Eqn.~\ref{eq:first-layer-correction}. 

In addition, message passing introduces graph-dependent linear operators whose normalization affects the scale of network activations and their updates. 
Across all our experiments, we find that symmetrically degree-normalized message passing operators yield sharp hyperparameter transfer. 
We also study $\gamma$ normalization (Eqn.~\ref{eq:normalized-gcn-update}), where an unnormalized message passing operator is used and a hyperparameter $\gamma$ is introduced to adjust the scale of the message passing update. 
Our results show that $\gamma$ normalization is also an effective strategy for controlling the scale of intermediate activations and their updates, leading to strong performance and hyperparameter transfer; in practice, scanning $\gamma$ in the vicinity of the scale shift induced by $\mathbf P$ (see Eqn.~\ref{eq:gamma-def}) may yield additional performance gains.

Several limitations to this work remain. 
Our scaling analysis focuses on simple linearized GNN blocks; while these models capture the dominant width and depth scaling we observe empirically, modern GNNs often include richer components like edge features~\cite{battaglia2016interactionnetworkslearningobjects}, attention~\cite{veličković2018graph,ladislav2022graphgps,shirzad2023exphormer}, Laplacian encodings~\cite{maskey2022generalized}, and deeper encoders and decoders. 
Extending this analysis to these additional architectural features is an important next step. 
Though our present results indicate that stable scaling is possible, future work should explore larger models and datasets, to make explicit comparisons with scaling laws in language and vision tasks. 

\subsection*{Acknowledgments} BH is grateful for support from a 2024 Sloan Fellowship in Mathematics, NSF CAREER grant DMS-2143754, NSF grant DMS-2133806, and DARPA AIQ grant (HR001124S0029).

\bibliographystyle{unsrt}  
\bibliography{references}  

\appendix

\section{Benchmark Datasets}
\label{sec:benchmark-datasets}

We experiment on a collection graph benchmarks spanning node classification, graph classification, and graph regression.

\paragraph{MNIST Superpixels.}
The MNIST~\cite{lecun-mnisthandwrittendigit-2010} Superpixels dataset contains handwritten digit images embedded as graphs with superpixel nodes. 
It includes one node feature, the average  greyscale intensity in the superpixel. 
The task is to predict the digit represented by the superpixel graph. Full dataset details are given in Table~\ref{tab:graph-dataset-summary}.

\paragraph{PascalVOC-SP.}
PascalVOC-SP~\cite{dwivedi2022lrgb} is a node classification dataset built from superpixels derived on the original Pascal VOC (Visual Object Class) image dataset. 
Each graph corresponds to an image, whose superpixels are common objects connected by edges encoding spatial adjacency. 
The task is to predict sematic object categories assigned to each superpixel node. Full dataset details are given in Table~\ref{tab:graph-dataset-summary}.

\paragraph{QM9 Dipole Moment Regression.}
QM9~\cite{ramakrishnan2014qm9} is a molecular property prediction benchmark consisting of small organic
molecules.
Molecules are embedded as graphs by assigning atoms to nodes and bonds to edges. 
Though the dataset contains multiple regression targets, we focus on the molecular dipole moment
\(\mu\). Full dataset details are given in Table~\ref{tab:graph-dataset-summary}. 

\paragraph{Citation Networks.}
Cora, CiteSeer, and PubMed~\cite{yang2016planetoid} are single-graph citation networks with node-labels and train/val/test masks; for this reason, the learning task is referred to as  semi-supervised node-classification. 
Edges are citations between papers embedded as nodes. 
Node features are sparse
bag-of-words vectors, and labels correspond to document topics.
Though the model makes predictions on every graph node via message passing, only those indicated by the nodes selected by the train mask are used during backpropagation.
Full dataset details are given in Table~\ref{tab:citation-dataset-summary}. 

\newpage 

\begin{table}[ht!]
    \centering
    \renewcommand{\arraystretch}{1.15}
    \setlength{\tabcolsep}{5pt}
    \begin{tabular}{lcccccccl}
        \toprule
        \textbf{Dataset}
        & \textbf{Task}
        & \textbf{Train}
        & \textbf{Val}
        & \textbf{Test}
        & \textbf{Avg Nodes}
        & \textbf{Avg Edges}
        & \textbf{Node Feats}
        & \textbf{Val Metric}
        \\
        \midrule
        MNIST
        & Graph Cls.
        & 54k
        & 6k
        & 10k
        & 71
        & 1393
        & 1 
        & Accuracy
        \\
        PascalVOC-SP
        & Node Cls.
        & 8{,}498
        & 1{,}428
        & 1{,}429
        & 479
        & 2{,}710
        & 14 
        & Macro-F1
        \\
        QM9-\(\mu\)
        & Graph Reg.
        & $\approx$105k
        & $\approx$13k
        & $\approx$ 13k
        & 18
        & 37
        & 11 
        & MAE
        \\
        \bottomrule
        \vspace{0.5mm}
    \end{tabular}
    \caption{
        Empirical benchmarks used throughout this paper; note that train/val/test
        splits are not unique across the literature and the table reports the splits used
        in our experiments. 
    }
    \label{tab:graph-dataset-summary}
\end{table}

\begin{table}[ht!]
    \centering
    \renewcommand{\arraystretch}{1.15}
    \setlength{\tabcolsep}{6pt}
    \begin{tabular}{lccccccccc}
        \toprule
        \textbf{Dataset}
        & \textbf{Nodes}
        & \textbf{Edges}
        & \textbf{Node Feats}
        & \textbf{Classes}
        \\
        \midrule
        Cora
        & 2{,}708
        & 10{,}556
        & 1{,}433 
        & 7
        \\
        CiteSeer
        & 3{,}327
        & 9{,}104
        & 3{,}703
        & 6
        \\
        PubMed
        & 19{,}717
        & 88,648
        & 500 
        & 3
        \\
        \bottomrule
        \vspace{0.5mm}
    \end{tabular}
    \caption{
        The Planetoid semi-supervised citation-network benchmarks~\cite{yang2016planetoid}. Each dataset consists of a
        single citation graph with fixed train, validation, and test
        node masks. We use the canonical full split on all. 
    }
    \label{tab:citation-dataset-summary}
\end{table}

\end{document}